\newcommand{\BL}[1]{\textit{\textcolor{blue}}{#1}}
\newcommand{\BF}[1]{\textbf{#1}}
\newcommand{\rp}{\texttt{rank-pool}}
\newcommand{\IDT}{\texttt{IDT}}
\journalname{International Journal of Computer Vision}
\newcommand*{\eg}{e.g.\@\xspace}
\newcommand*{\ie}{i.e.\@\xspace}
\newcommand*{\etal}{et~al.\@\xspace}
\newcommand*{\etc}{%
    \@ifnextchar{.}%
        {etc}%
        {etc.\@\xspace}%
}
\newtheorem{defn}[THEOREM]{Definition}
\begin{document}

\title{Discriminatively Learned Hierarchical Rank Pooling Networks
}


\author{Basura~Fernando~\and~Stephen~Gould}
\authorrunning{Fernando and Gould} 

\institute{ACRV, Research School of Engineering \\
The Australian National University\\
              Canberra, Australia\\
              Tel.: +61-2612-53973\\              
              \email{basura.fernando@anu.edu.au}}

\date{Received: date / Accepted: date}

\maketitle

\begin{abstract}
Rank pooling is a temporal encoding method that summarizes the dynamics of a video sequence to a single vector which has shown good results in human action recognition in prior work. 
In this work, we present novel temporal encoding methods for action and activity classification by extending the unsupervised rank pooling temporal encoding method in two ways.

First, we present \emph{discriminative rank pooling} in which the shared weights of our video representation and the parameters of the action classifiers are estimated jointly for a given training dataset of labelled vector sequences using a bilevel optimization formulation of the learning problem. 
When the frame level features vectors are obtained from a convolutional neural network (CNN), we rank pool the network activations and jointly estimate all parameters of the model, including CNN filters and fully-connected weights, in an end-to-end manner which we coined as \emph{end-to-end trainable rank pooled CNN}. 
Importantly, this model can make use of any existing convolutional neural network architecture (e.g., AlexNet or VGG) without modification or introduction of additional parameters.

Then, we extend rank pooling to a high capacity video representation, called \emph{hierarchical rank pooling}.
Hierarchical rank pooling consists of a network of rank pooling functions, which encode temporal semantics over arbitrary long video clips based on rich frame level features. 
By stacking non-linear feature functions and temporal sub-sequence encoders one on top of the other, we build a high capacity encoding network of the dynamic behaviour of the video. 
The resulting video representation is a fixed-length feature vector describing the entire video clip that can be used as input to standard machine learning classifiers.

We demonstrate our approach on the task of action and activity recognition. We present a detailed analysis of our approach against competing methods and explore variants such as hierarchy depth and choice of non-linear feature function. 
Obtained results are comparable to state-of-the-art methods on three important activity recognition benchmarks with classification performance of 76.7\% mAP on Hollywood2, 69.4\% on HMDB51, and 93.6\% on UCF101.


\keywords{rank pooling \and action recognition \and activity recognition \and convolutional neural networks}
\end{abstract}

\section{Introduction}
\label{sec:intro}
\begin{figure}
 \centering 
 \includegraphics[width=\linewidth]{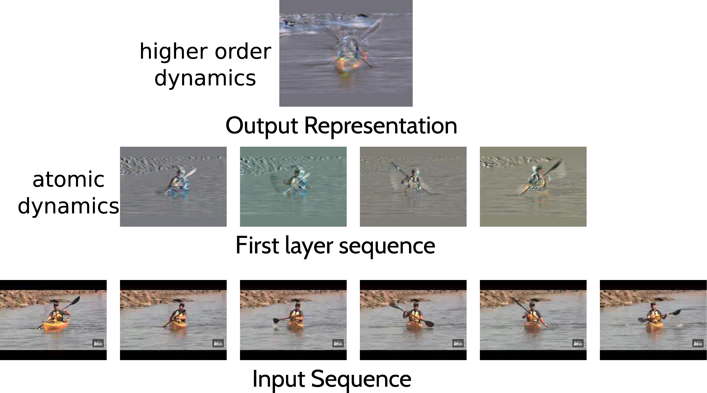}
 \caption{Illustration of hierarchical rank pooling for encoding the temporal
 dynamics of a video sequence.}
 \label{fig:illustrate} 
\end{figure}

Representation learning from sequence data has many applications including action and activity recognition from videos \cite{Poppe2010}, gesture recognition \cite{Bregler1997}, music classification from audio clips \cite{Lu2002}, and gene regulatory network analysis from gene expressions \cite{Shinozaki2003}.
In this paper we focus on activity and action recognition in videos, which is important for many real life applications including human computer interaction, sports analytic, and elderly monitoring and healthcare.
Neural network-based supervised learning of representations from sequence data has many advantages compared to hand-crafted feature engineering. 
However, capturing the discriminative behaviour of sequence data is a very challenging problem; especially when neural network-based supervised learning is used, which can overfit to irrelevant temporal signals.
In video sequence classification, and especially in action recognition, a key challenge is to obtain discriminative video representations that generalize beyond the training data.
Moreover, a good video representation should be invariant to the speed of the human actions and should be able to capture long term time evolution information, i.e.,~the temporal dynamics.
In action recognition a key challenge is to extract and represent high-level motion patterns, dynamics, and evolution of appearance of videos.
One can argue that end-to-end learning of video representations are the key to successful human action recognition.
However, it is extremely hard problem due to massive amount of video data that is required to learn such end-to-end video representations. 
A further challenge is to encode dynamics efficiently and effectively from variable length sequences.
This calls for novel spatio-temporal neural network architectures.

Recent success in action and activity recognition has been achieved by modelling evolving temporal dynamics in video sequences \cite{Bilen2016,Fernando2015,Fernando2016,karpathy2014large,Srivastava2015,Yue-HeiNg2015}. 
Some methods use linear ranking machines to capture first order dynamics \cite{Fernando2015,hoi2014}. 
Other methods encode temporal information using RNN-LSTMs on video sequences \cite{Srivastava2015,Yue-HeiNg2015,Zha2015}, but at the cost of many more model parameters.
To further advance activity recognition it is beneficial to exploit temporal information at multiple levels of granularity in a hierarchical manner and thereby capture more complex dynamics of the input sequences \cite{Du2015,Lan2015b,Song2013}. 
As frame based features improve, \eg, from a convolutional neural network (CNN), it is important to exploit information not only in the spatial domain but also in the temporal domain.
Several recent methods have obtained significant improvements in image categorisation and object detection using very deep CNN architectures \cite{Simonyan2014a}. 
Motivated by these deep hierarchies \cite{Du2015,Lan2015b,Song2013,Simonyan2014a}, we argue that learning a temporal encoding at a single level is not sufficient to interpret and understand video sequences, and that a temporal hierarchy is needed. 

In addition, we argue that end-to-end learning of video representations are necessary for reliable human action recognition. 
In recent years CNNs have become very popular for automatically learning representations from large collections of static images. 
Many tasks in computer vision, such as image classification, image segmentation and object detection, have benefited from such automatic representation learning \cite{Krizhevsky2012,Girshick2014}. 
However, it is unclear how one may extend these highly successful CNNs to sequence data; especially, when the intended task requires capturing dynamics of video sequences (e.g.,~action and activity recognition). 
Indeed, capturing the discriminative dynamics of a video sequence remains an open problem. 
Some authors have proposed to use recurrent neural networks (RNNs) \cite{Du2015} or extensions, such as long short term memory (LSTM) networks \cite{Srivastava2015}, to classify video sequences. 
However, CNN-RNN/LSTM models introduce a large number of additional parameters to capture sequence information. 
Consequently, these methods need much more training data. 
For sequence data such as videos, obtaining labelled training data is significantly more costly than obtaining labels for static images.
This is reflected in the size of datasets used in action and activity recognition research today. 
Even though there are datasets that consist of millions of labelled images (e.g.,~ImageNet~\cite{ImageNet:2009}), the largest fully labelled action recognition dataset, UCF101, consists of barely more than 13,000 videos \cite{soomro2012ucf101}. 
Some notable efforts to create large action recognition datasets include the Sports-1M~\cite{karpathy2014large}, the YouTube-8M~\cite{Abu-El-Haija2016} and the ActivityNet dataset~\cite{Snoek2016}.
The limitation of Sports-1M and YouTube-8M is that they are constructed from weakly labelled human annotations and sometimes annotations are very noisy.
Furthermore, ActivityNet only consist of 20,000 high quality annotated videos, which is insufficient for learning good video representations.
Despite recent efforts in building good action recognition datasets~\cite{kay2017kinetics}, it is highly desirable, therefore, to develop frameworks that can learn discriminative dynamics from video data without the cost of additional training data or model complexity.

Perhaps the most straightforward CNN-based method for encoding video sequence data is to apply temporal max pooling or temporal average pooling over the video frames. 
However, these methods do not capture any valuable time varying information of the video sequences \cite{karpathy2014large}. 
In fact, an arbitrary reshuffling of the frames would produce an identical video representation under these pooling schemes. 
Rank-pooling \cite{Fernando2015,Fernando2016}, on the other hand, attempts to encode time varying information by learning a linear ranking machine, one for each video, to produce a chronological ordering of the video's frames based on their appearance (i.e.,~the hand-crafted or CNN features). 
The parameters of the ranking machine (i.e.,~fit linear model) are then used as the video representation. 
However, unlike max and average pooling, it was previously unclear how the CNN parameters can be fine-tuned to give a more discriminative representation when rank-pooling is used since there is no closed-form formula for the rank-pooling operation and the derivative of its input arguments with respect to the rank-pool output not obvious.

The original rank pooling method of Fernando \etal \cite{Fernando2015,Fernando2016} obtained good activity recognition performance using hand-crafted features.
Given a sequence of video frames, the rank pooling method returns a vector of parameters encoding the dynamics of that sequence. 
The vector of parameters is derived from the solution of a linear ranking SVM optimization problem applied to the entire video sequence, \ie, at a single level.
We extend that work in two important directions that facilitates the use of richer CNN-based features to describe the input frames and allows the processing of more complex video sequences.


First, we show how to learn discriminative dynamics of video sequences or vector sequences using rank pooling-based temporal pooling. We show how the parameters of the activity classifier, shared parameters of video representations, and the CNN features themselves can all be learned jointly using a principled optimization framework. A key technical challenge, however, is that the optimization problem contains rank pooling as a subproblem---itself a non-trivial optimization problem. 
This leads to a large-scale bilevel optimization problem \cite{Bard} with convex inner-problem, which we propose to solve by stochastic gradient descent. The result is a higher capacity model than Fernando \etal \cite{Fernando2015,Fernando2016}, which is tuned to produce features that are discriminative for the task at-hand.
Concisely, we learn \emph{discriminative dynamics} during learning by propagating back the errors from the final classification layer to learn both video representation and a good classifier.

Second, we propose a hierarchical rank-pooling scheme that encodes a video sequence at multiple levels. The original video sequence is divided into multiple overlapping video segments. At the lowest level, we encode each video segment using rank pooling to produce a sequence of descriptors, one for each segment, which captures the dynamics of the small video segments (see Figure~\ref{fig:illustrate}). We then take the resulting sequence, divide that into multiple subsequences, and apply rank pooling to each of these next-level subsequences.
By recursively applying rank pooling on the obtained segment descriptors from the previous layer, we capture higher-order, non-linear, and more complex dynamics as we move up the levels of the hierarchy.
The final representation of the video is obtained by encoding the top-level dynamic sequence using yet one more rank pooling. 
This strategy allows us to encode more complicated activities thanks to the higher capacity of the model. 
In summary, our proposed hierarchical rank pooling model consists of a feed forward network starting with a frame-based CNN and followed by a series of point-wise non-linear operations and rank pooling operations over subsequences as illustrated in Figure~\ref{fig:hpooling}.

Our main contributions are then: (1) a novel discriminative dynamics learning framework in which we learn discriminative frame-based CNN features for the task at-hand in an end-to-end manner or joint learning of parameters of video representation using rank pooled discriminative video representation, and the classifier parameters, (2) a novel temporal encoding method called \emph{hierarchical rank pooling}.

Our proposed method is useful for encoding dynamically evolving frame-based CNN features, and we are able to show significant improvements over other effective temporal encoding methods.

This paper is an extension of our two recent conference papers~\cite{Fernando2016b,Fernando2016a}. In this journal version we provide a broad overview of the action recognition progress and extend the related work section. Here
we unify the learning of discriminative rank pooling and full end-to-end parameter learning using the same bilevel optimization framework. 
Some additional experiments and analysis are also included.
The rest of the paper is organised as follows. 
Related work is discussed in~\secref{sec:related} followed by a brief background to rank pooling and some preliminaries in~\secref{sec:background}. 
We present our discriminative networks in~\secref{sec:discriminative.methods} and discuss how the resulting representation can be used to classify videos. 
In~\secref{sec.learning} we show how all the parameters of the discriminative networks can be learned. 
Then in~\secref{sec.hrp}, we present our hierarchical rank pooling method.
In~\secref{sec:experiments}, we provide extensive experiments evaluating various aspects of our proposed methods.
We conclude the paper in~\secref{sec:conclusion} with a summary of our main contributions and discussion of future directions.

\section{Related Work}
\label{sec:related}
In the literature, temporal information of video sequences is encoded using different techniques. 
Fisher encoding \cite{Perronnin2010} of spatial temporal features is commonly used in prior state-of-the-art works \cite{wang2013action} while Jain \etal \cite{jain2013better} used VLAD encoding \cite{Jegou2010} for action recognition over motion descriptors. 
Temporal max pooling and sum pooling are used with bag-of-features \cite{wang2013dense} as well as CNN features \cite{Ryoo2015}.
Temporal fusion methods such as late fusion or early fusion are used in \cite{karpathy2014large} as a temporal encoding method in the context of CNN architectures. 
In contrast, we rely on principled rank-pooling to encode temporal information inside CNNs and therefore our method is capable capturing dynamics of video sequences.

Temporal information can also be encoded using 3D convolution operators \cite{Ji2013,Tran2015} on fixed size temporal segments. 
However, as recently demonstrated by Tran~\etal \cite{Tran2015}, such approaches rely on very large video collections to learn meaningful 3D-representations. 
This is due to the massive amount of parameters used in 3D convolutions.
Sun~\etal \cite{Sun2015} propose to factorize 3D convolutions into spatial 2D convolutions followed by 1D temporal convolutions to ease the training. 
Moreover, it is not clear how these methods can capture long-term dynamics as 3D convolutions are applied only on short video clips. 
In contrast, our method does not introduce any additional parameters to existing 2D CNN architectures and capable of learning and capturing long term temporal dynamics.


Recently, recurrent neural networks are gaining popularity for sequence encoding, sequence generation and sequence classification \cite{Hochreiter1997,Sutskever2014}. 
Long-short term memory (LSTM) based approaches may use the hidden state of the encoder as a video representation \cite{Srivastava2015}. 
Derivative of the state of the RNN is modelled in differential RNN (dRNN) to capture the dynamics of video sequences \cite{Veeriah2015}.
A CNN feature based LSTM model for action recognition is presented in \cite{Yue-HeiNg2015}. 
Typically, unsupervised recurrent neural networks are trained in a probabilistic manner to maximize the likelihood of generating the next element of the sequence. 
By construction our hierarchical rank pooling method is unsupervised and does not rely on very large number of training samples as in recurrent neural networks as our method does not have any parameters to learn.
Moreover, our hierarchical rank pooling has a clear objective in capturing dynamics of sequences independent of other sequences and has the capacity to capture complex dynamic signals.


Hierarchical methods have also been used in activity recognition \cite{Du2015,Li2016,Song2013}. 
A CRF-based hierarchical sequence summarization method is presented in \cite{Song2013}; a hierarchical recurrent neural network for skeleton based action recognition is presented in \cite{Du2015}; and a hierarchical action proposal based mid-level representation is presented in \cite{Lan2015b}. 
Recently, VLAD for Deep Dynamics (VLAD3), that accounts for different set of video dynamics is presented in \cite{Li2016}. 
It also captures short-term dynamics with deep convolutional neural net-work features, relying on linear dynamic systems (LDS) to model medium-range dynamics. 
To account for long-range inhomogeneous dynamics, a VLAD descriptor is derived for the linear dynamic systems and pooled over the whole video, to arrive at the final VLAD3 representation. 
In contrast to these methods, our method captures different set of mid-level dynamics as well as dynamics of the entire video using rank pooling principle.

Long term temporal dynamics are also modelled using Beta Process Hidden Markov Models (BP-HMM \cite{Fox2009}). 
Using a beta process prior, these approaches discover a set of latent dynamical behaviours that are shared among multiple time series. 
The size of the set and the sharing pattern are both inferred from data. 
Some notable extensions of this approach are used in video analysis and action recognition \cite{Sener2015,Hughes2012}.
Compared to these methods, not only is our framework capable of capturing long term dynamics, it is also capable of capturing dynamics at multiple levels of granularity while being able to learn discriminative dynamics.

Recently, two stream models \cite{Simonyan2014} have gained popularity for action recognition. In these methods, a temporal stream is obtained by using optical flow and spatial stream is obtained by RGB frame data and finally the information is fused \cite{Feichtenhofer2016}. Moreover, trajectory-pooled deep-convolutional descriptor (TDD) also uses two stream network architecture where convolutional feature maps are pooled from the local ConvNet responses over the spatio-temporal tubes centered at the improved trajectories \cite{Wang2015}. Our method presented in this paper is complimentary to these two stream architectures. For example, our hierarchical temporal encoding as well as the end-to-end trainable rank pooled CNN can be applied over both spatial and temporal streams.


Rank pooling is also used for temporal encoding at representation level \cite{Fernando2015,Fernando2016} or at image level leading to dynamic images \cite{Bilen2016}.
However, we are the first to extend rank pooling to a high capacity temporal encoding. Furthermore, we are the first to demonstrate an end-to-end trainable CNN-based rank pool operator.

Our end-to-end learning algorithm introduces a bilevel optimization method for encoding temporal dynamics of video sequences using convolutional neural networks. 
Bilevel optimization \cite{Bard,Gould2016} is a large and active research field derived from the study of non-cooperative games with much work focusing on efficient techniques for solving non-smooth problems \cite{OB15a} or studying replacement of the lower level problem with necessary conditions for optimality \cite{dempe2015}. 
It has recently gained interest in the machine learning community in the context of hyperparameter learning \cite{klatzer2015,Do2007} and in the computer vision community in the context of image denoising \cite{Domke:AISTATS12,kunisch2013}. 
Unlike these works we take a gradient-based approach, which the structure of our problem admits. We also address the problem of encoding and classification of temporal sequences, in particular action and activity recognition in video.

Recently, several end-to-end video classification and action recognition method were introduced in the literature \cite{Ji2013,karpathy2014large,Simonyan2014}. Compare to other end-to-end video representation learning methods our end-to-end learning has two advantages. First, our temporal pooling is based on rank pooling and hence captures the dynamics of long video sequences. Second, it does not introduce any new parameters to existing image classification architectures such as AlexNet \cite{Krizhevsky2012}. Ji~\etal \cite{Ji2013} introduces an end-to-end 3D convolution method that can be only applied for a fixed length videos. Karpathy~\etal \cite{karpathy2014large} used several fusion architectures. Very large Sports-1M dataset was used for training which consist of more than million YouTube videos of sports activities. Unfortunately, authors found that operating on individual video frames, performs similarly to the networks, whose input is a stack of frames. This indicates that the architectures proposed in \cite{karpathy2014large} are not able to learnt spatio-temporal features or capture dynamics of videos. Simonyan~\etal \cite{Simonyan2014} also propose an end-to-end architecture which only operates at frame-level and finally fuse classifier scores per video.


\section{Preliminaries}
\label{sec:background}
In this section we introduce the notation used in this paper and provide background on the \emph{rank pooling} method \cite{Fernando2015,Fernando2016}, which our work extends. 
Given a training dataset of video-label pairs $\D = \{(X^{(0)}, y)\}$, the goal in action classification is to learn both parameters of the classifier and video representation such that the error on the training set is minimized.
Let $X^{(0)} = \langle x_1^{(0)}, \ldots, x_J^{(0)} \rangle$ be the (ordered) sequence of input RGB video frames.\\ 

\noindent
\textbf{Feature extraction function $\psi_{0}()$}:
Let us define a feature extraction function that takes an input frame and returns a fixed-length feature vector by $\psi_{0} : x_t^{(0)} \mapsto \bx_t^{(1)}$. 
This operation transforms a sequence of RGB frames $X^{(0)}$ into a sequence of feature vectors denoted by $X^{(1)} = \langle x_1^{(1)}, \ldots, x_J^{(1)} \rangle$.
Sometimes, to simplify the notation, we denote a sequence of vectors just by $X = \langle \bx_1, \ldots, \bx_J \rangle$.
Each of the elements $\bx_t^{(1)}$ in the sequence $X^{(1)}$ is a vector, i.e., $\bx_t^{(1)} \in \reals^D$.
For example, the vector $\bx_t^{(1)}$ can be the activations from the last fully connected layer of a CNN which is obtained from a RGB video sequence at frame $t$. 
This frame-based feature extractor can be parametrized $\psi_{0}(x_t^{(0)}; \theta)$, where for example, $\theta$ are the parameters of a trainable CNN.\\

\noindent
\textbf{Non-linear operator $\psi()$}:
Let us assume that each video is processed by a feature extractor and then a sequence of vectors is obtained by applying a non-linear transformation. 
Let us denote a point-wise non-linear operator by $\psi()$ and the non-linear transformation is obtained by $\bv_t = \psi(\bx_t)$ or a parametrised non-linear transform is obtained by 
\begin{equation}
\bv_t = \psi(W\bx_t) 
\label{eq:nonlineareq}
\end{equation}
where $W\in\reals^{D \times D}$.
Let us denote the obtained sequence of vectors by $V = \left<\bv_1, \ldots, \bv_J\right>$ where each $\bv_t \in \reals^D$.\\

\noindent
\textbf{Temporal encoding function $\phi()$}:
A compact video representation is needed to classify a variable-length video sequence into one of the activity classes. 
As such, a temporal encoding function that operates over a sequence of vectors is defined by $\phi(V)$, which maps the video sequence $V$ (or sub-sequence thereof) into a fixed-length feature vector, $\bu \in \reals^D$. The goal of temporal encoding is to encapsulate valuable dynamic information in $V$ into a single $D$-dimensional vector $\bu = \phi(V)$. 
In general we can write the temporal encoding function as an optimization problem over a sequence $V$ as
\begin{align}
  \phi(V) &\in \argmin_{\bu} f(V, \bu)
  \label{eqn:general_argmin}
\end{align}
where $f(\cdot, \cdot)$ is some measure of how well the sequence is described by each representation $\bu$ and we seek the best representation.
Standard supervised machine learning classification techniques learned on the set of training videos can then be applied to these $\bu$ vectors.

Typical temporal encoding functions include sufficient statistics calculations or simple pooling operations, such as \texttt{max} or average (\texttt{avg}). 
For example, \texttt{avg.} pooling can be written as the following optimization problem in~\eqnref{eq.avg.opt}.
\begin{align}
  \mathtt{avg}(V)
  &= \argmin_{\bu} \left\{ \frac{1}{2} \sum_{t=1}^{J} \|\bu - \bv_t\|^2 \right\}
  \label{eq.avg.opt}
\end{align}
\\

\noindent
\textbf{Rank pooling}:
The \texttt{max} and \texttt{avg} pooling operators do not capture the dynamic of a video sequence.
More sophisticated, temporal encoders such as the \texttt{rank-pool} operator, attempts to capture temporal dynamics \cite{Fernando2015,Fernando2016}. 
The sequence encoder $\phi(\cdot)$ of rank pooling \cite{Fernando2015,Fernando2016} captures time varying information of the entire sequence using a single linear surrogate function $\zeta$ parametrised by $\bu \in \reals^D$. 
The function $\zeta$ ranks frames of the video $V = \left<\bv_1, \ldots, \bv_J\right>$ based on the chronology based on their feature representation. 
Ideally, the ranking function satisfies the constraint
\begin{align}
  \zeta( \bv_{t_a}) < \zeta(\bv_{t_b}) &\iff t_a < t_b
  \label{eqn:order_constraints}
\end{align}
such that the ranking function should learn to order frames chronologically. 
In the linear case this boils down to finding a parameter vector $\bu$ such that $\zeta(\bv; \bu) = \bu^T\bv$ satisfies \eqnref{eqn:order_constraints}. 
In rank pooling \cite{Fernando2015,Fernando2016} this is done by training a linear ranking machine such as RankSVM \cite{JoachimsKDD2006} on $V$. The learned parameters of RankSVM, \ie, $\bu$, are then used as the temporal encoding of the video.
Since the ranking function encapsulates ordering information and the parameters lie in the same feature space, the ranking function captures the \emph{evolution of information} in the sequence $V$ \cite{Fernando2015,Fernando2016}.

Rank pooling can be viewed as a function that estimates the parameters $\bu$ in a point-wise manner such that it maps feature vectors $v_t$ to time $t$. 
Such a mapping clearly satisfies the order constraints of \eqnref{eqn:order_constraints}. 
The idea of rank pooling is to parameterize $\zeta$ and then find the parameters $\bu^\star$ that best represents the sequence $V$. 
Due to availability of fast implementations, we use Support Vector Regression (SVR) \cite{Liu2009} to solve this problem. 
Given a sequence of length $J$, the SVR parameters are given by
\begin{align}
 \bu^\star \!&\in \argmin_{\bu} \!\left\{ \!\frac{1}{2}\|\bu\|^{2} + \frac{C}{2} \sum_{t=1}^J \Big[ |t - \bu^\top \bv_t| - \epsilon\Big]_{\geq 0}^2 \!\right\}
 \label{eq.svr}
\end{align}
where $\left[\cdot\right]_{\geq 0} = \max\{\cdot, 0\}$ projects onto the positive reals.

The advantage of stability and robustness in modelling dynamics is discussed in \cite{Fernando2016}. 
As the SVR objective has some theoretical guarantees on the generalization and stability \cite{BousquetJMLR2002} the obtained temporal representation $\bu^\star$ is robust to small perturbed versions of the input. Therefore, the above SVR objective is advantageous for modelling dynamics.
We use the parameter $\bu^\star$, returned by SVR, as the temporal encoding vector of the video sequence.


\subsection{overview}
One of the limitations of rank pooling method presented in \cite{Fernando2015,Fernando2016} is that obtained temporal representation $\bu$ is not discriminative as the classifier and the underlying frame representation is obtained independently. 
In this work we extend the work of Fernando~\etal \cite{Fernando2015,Fernando2016}. 
First, we show a learning framework for discriminative temporal encoding using rank pooling in section~\ref{sec:discriminative.methods}.
Given a collection of labelled videos, we show how to learn frame representation, temporal representation for the video and the classifier jointly. 
In this case, the temporal representation is obtained by \texttt{rank-pool} operator. 
We also learn a discriminative rank pooling operator when a set of labelled sequences of vectors are provided as the input. 
In this case, we learn the classifier parameters and the discriminative temporal representation jointly.
Parameter learning of these discriminative models is explained in section~\ref{sec.learning}.
Second, we show hierarchical rank-pooling, a new hierarchical temporal encoding scheme which extends the \texttt{rank-pool} operator in section~\ref{sec:hrp}.
To learn discriminative hierarchical representation, one can stack discriminative rank pooling network over the hierarchical rank pooling network.
In experiments, we demonstrate how to combine hierarchical rank pooling with discriminative learning framework to obtain good results for action recognition (section~\ref{sec.exp.disk.rp}).


\section{Discriminative video representations with rank-pooling networks}
\label{sec:discriminative.methods}

In this section, we introduce our proposed trainable rank pooling network based video representation framework.
We consider two scenarios to learn discriminative video representations using \texttt{rank-pool} operator.
In both cases, the temporal encoding of frame level feature vectors is obtained with rank pooling.
\begin{enumerate}
 \item In the first scenario, the input to our algorithm is a set of labelled row RGB videos $\D = \{(X^{(0)}, y)\}$. 
 Then our aim is to learn parametrized feature extractor (a CNN \cite{Krizhevsky2012} feature extractor which is denote by $\bv_t = \psi_{0}(x_t^{(0)}; \theta)$), the temporal video representation ($\bu$) and the action classifiers jointly. In this case $\theta$  is the set of parameters in a trainable CNN. 
 \item In the second scenario, input to our algorithm is a set of labelled sequences of vectors obtained from video sequences. 
 We aim to learn a parameterized non-linear operation denoted by Equation~\eqref{eq:nonlineareq} and the classifier parameters jointly.  
 The $W$ matrix is shared across all sequences from all classes.
\end{enumerate}

Next, we provide more details about these two models.
First, we discuss our end-to-end video representation and classification model in~\secref{sec.endtoend}. 
Then in~\secref{sec.discriminative}, we introduce the discriminative \texttt{rank-pool} operator that operates over a sequences of vectors.

\subsection{End-to-end trainable rank pooled CNN}
\label{sec.endtoend}

In the first scenario, the input to our framework is a sequence of raw RGB videos with action category labels $\D = \{(X^{(0)}, y)\}$.
We assume that each video frame in the input sequence is encoded by a CNN network \cite{Krizhevsky2012} $\psi_{0}(\cdot; \btheta)$ which is parameterized by $\theta$ and that the resulting sequence of features $V=\left< \ldots, \bv_t, \ldots \right>$ is encoded using rank pooling (the temporal encoder $\phi$) by solving the objective function in Equation~\eqref{eq.svr}. 
The model we propose can be summarized by the following network equation:
\begin{align}
X^{(0)} = \left< x_t^{(0)} \right>
\overset{\psi_{0}(\cdot; \btheta)}{\longmapsto}
\left< \bv_t \right>
\overset{\phi}{\longmapsto}
\bu
\overset{h_{\bbeta}}{\longmapsto}
\hat{y}
\end{align}
where the feed-forward pass of the network go from a video sequence $X^{(0)}$ to predicted label $\hat{y}$.
The final layer is our prediction function (a soft-max classifier) $h_{\bbeta}$ parameterized by $\bbeta$. 
Therefore, the probability of a label $y$ given the input sequence $X^{(0)}$ can be written as
\begin{align}
P(y \mid X^{(0)}; \bbeta, \btheta) &= \frac{\exp(\beta_y^\top \bu)}{\sum_{c} \exp(\beta_{c}^\top \bu)}
\label{eqn:softmax}
\end{align}
where we have used $\bu$ to denote the final video encoding. 
Importantly, $\bu$ is a function of both the input video sequence $X^{(0)}$ and the network parameters $\btheta$. 
Here the predictor function $h_{\bbeta}(\bu)$ takes the highest probability (most likely) $y$ over the discrete set of labels and $\bbeta = \{\beta_y\}$ are the learned parameters of the model.

The detailed network architecture is shown in Figure~\ref{fig:cnnnet}. 
We use a CNN architecture similar to CaffeNet \cite{Jia2014} with the addition of a temporal pooling layer. 
In our experiments we use the final activation layers of the CNN as the frame level features and then apply the temporal pooling (\texttt{rank-pool} operator) as shown in Figure~\ref{fig:cnnnet}. 
\begin{figure}
 \centering 
 \includegraphics[width=\linewidth]{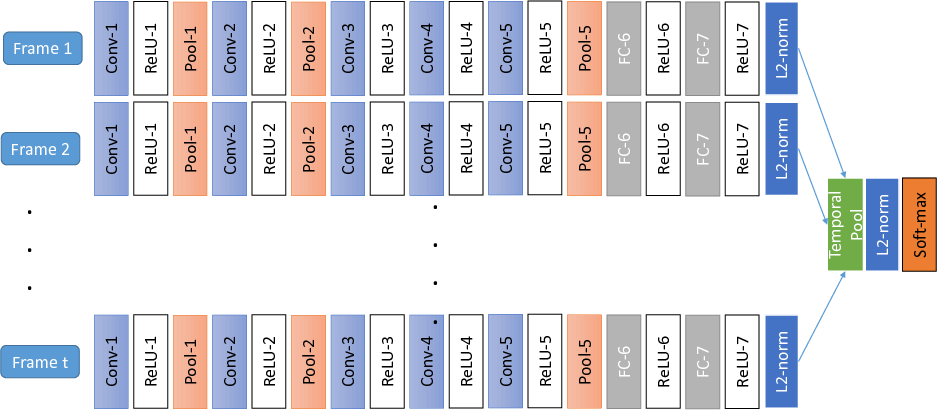}
 \caption{The CNN network architecture used for learning end-to-end temporal representations of videos. Network takes a sequence of frames from a video as inputs and feed forward till the end of the temporal pooling layer. At the temporal polling layer, the sequence of vectors are encoded by rank-pooling operator to produce fixed length video representation. This fixed length vector is feed to the next layer in the network. Note that this network does not introduce any new parameters to network architectures. During back-propagation, the gradients are feed backwards through the rank-pooling operator to the rest of the CNN network.}
 \label{fig:cnnnet} 
\end{figure}
During training, our objective is to learn the parameters $\bbeta$ and $\btheta$. During inference we fix $\btheta$ and $\bbeta$ to their learned values; $\btheta$ is used to obtain the frame representation of the video that is used to obtain $\bu$ via temporal encoding and which is then classified (using parameters $\btheta$) into an estimated action class for the video.

\subsection{Discriminative rank pooling}
\label{sec.discriminative}
In this section, we discuss the second model where the input to the feature extractor is a sequence of vectors instead of sequence of RGB frames.
We present a method to learn dynamics of any vector sequence in a discriminative manner using \texttt{rank-pool} operator as the temporal encoder.
In this instance, the parameterized non-linear operation as in Equation~\eqref{eq:nonlineareq} is applied over the feature vectors of the sequence $X = \left<\bx_1, \ldots,\bx_t, \ldots \bx_J\right>$.
The function $\psi$ is a non-linear feature function such as ReLU \cite{Krizhevsky2012}.
The discriminative rank pooling network can be summarized as follows:

\begin{align}
X = \left< \bx_t \right>
\overset{\psi(\cdot; W)}{\longmapsto}
\left< \bv_t \right>
\overset{\phi}{\longmapsto}
\bu
\overset{h_{\bbeta}}{\longmapsto}
\hat{y}
\end{align}
where $h_{\bbeta}$ is the soft-max classifier parameterized by $\bbeta$. 
Similar to ~\secref{sec.endtoend}, our aim is to jointly learn the non-linear transformation parameter $W$ of $\psi(\cdot; W)$ along with the classifier parameters denoted by $\bbeta$.


\section{Learning the parameters of rank pooling networks}
\label{sec.learning}
Now we have presented our two video representation models in the previous section, we discuss how to learn the parameters in this section.
First, we formulate the overall learning problem in~\secref{sec.opt} and then we show how to learn the parameters with stochastic gradient descent in~\secref{sec.sgd}.
Then we compute the gradient function of our two models in~\secref{sec.sgd.cnn} and~\secref{sec.sgd.dis} respectively.
Finally, we discuss some optimization difficulties and solutions in~\secref{sec.opt.diff}.

\subsection{Optimization problem}
\label{sec.opt}
The learning problem can be described as follows. 
Given a training dataset of video-label pairs $\D = \{(X^{(0)}, y)\}$ (or $\D = \{(X, y)\}$), our goal is to learn both parameters of the classifier and video representation $\theta$ (or $W$)  such that the error on the training set is minimized. 
Let $\Delta(\cdot, \cdot)$ be a loss function. 
For example, when using the soft-max classifier a typical choice would be the cross-entropy loss
\begin{align}
  \Delta(y, h_{\bbeta}(\bu)) &= -\log P(y \mid X^{(0)})
\end{align}
where $P(\cdot \mid \cdot)$ is defined by Equation~\eqref{eqn:softmax}.

We jointly estimate the parameters of the feature extractors ($\theta$ or $W$) and prediction function ($\bbeta$) by minimizing the regularized empirical risk. 
Formally, our learning problem for end-to-end trainable rank pooled CNN is
\begin{align}
  \begin{array}{ll}
    \text{minimize}_{\btheta, \bbeta} & \sum_{(X^{(0)}, y) \in \D} \Delta\!\left(y,\, h_{\bbeta}(\bu_X)\right) + R(\btheta, \bbeta)
    \\
    \text{subject to} & \bu_X \in \argmin_{\bu} f(V, \bu; \btheta )
  \end{array}
  \label{eqn:learning}
\end{align}
where $R(\cdot, \cdot, \cdot)$ is some regularization function, typically the $\ell_2$-norm of the parameters, and the function $f(\cdot)$ encapsulates the temporal encoding of the video sequence using rank pooling temporal encoder $\phi$ by solving~\eqref{eq.svr}. 
The vector $\bu_X$ then represents the output of the rank pooling operator.
It should be noted that the learning problem for discriminative rank pooling of~\secref{sec.discriminative} is similar to the Equation~\eqref{eqn:learning}.

\eqnref{eqn:learning} is an instance of a bilevel optimization problem, which have recently been explored in the context of support vector machine (SVM) hyper-parameter learning \cite{klatzer2015} but whose history goes back to the 1950s \cite{Bard}. Here an upper level problem is solved subject to constraints enforced by a lower level problem. A number of solution methods have been proposed for bilevel optimization problems. 
Given our interest in learning video representations, which is large-scale, gradient-based techniques are most appropriate to learn the parameters.


\subsection{Learning with stochastic gradient descent}
\label{sec.sgd}
We are now left with the task of tuning the parameters $\theta$ or $W$ to learn a discriminative video representation in order to improve the action recognition performance.
One such approach is to learn the classifier parameters and feature encoding parameters jointly via stochastic gradient descent (SGD). 
However, this requires back propagation of gradients through the network. 
When the temporal encoding function $\phi$ can be evaluated in closed-form (e.g., \texttt{max} or \texttt{avg} pooling) to obtain the temporal encoding vector $\bu_X$, we can substitute the constraints in \eqnref{eqn:learning} directly into the objective and use \mbox{(sub-)gradient} descent to solve for (locally or globally) optimal parameters. 
However, when rank pooling is used for temporal encoding the situation is not as simple. 
Recall that the rank pooling operator is itself an optimization problem, which takes an arbitrary long sequence of feature vectors and returns a fixed-length vector that preserves temporal information. 
In this instance, the gradient of an $\argmin$ function is required. 
Fortunately, when the lower level objective is twice differentiable we can compute the gradient of the $\argmin$ function as other authors have also observed \cite{OB15a,Domke2012,Do2007}.
We repeat the key result here for completeness.

\begin{lemma} \cite{Samuel:CVPR09}
Let $f: \reals \times \reals^n \rightarrow \reals$ be a continuous
function with first and second derivatives. 
\break
Let $\bg(x) = \argmin_{\by \in \reals^n}
f(x, \by)$. Then
\begin{align*}
  \bg'(x) &= - f_{YY}(x, \bg(x))^{-1} f_{XY}(x, \bg(x)).
  \label{eqn:vector_argmin}
\end{align*}
where $f_{YY} \doteq \nabla^{2}_{\by\by} f(x, \by) \in \reals^{n \times n}$ and 
$f_{XY} \doteq \frac{\partial}{\partial x} \nabla_{\by} f(x, \by) \in \reals^{n}$.
\label{lem:vector_argmin}
\end{lemma}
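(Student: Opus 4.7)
The plan is to prove the lemma by applying the implicit function theorem to the first-order optimality condition. Since $\bg(x)$ is an unconstrained minimizer of the smooth function $\by \mapsto f(x, \by)$, it must satisfy the stationarity condition
\begin{equation*}
  \nabla_{\by} f(x, \bg(x)) = \mathbf{0} \quad \text{for all } x.
\end{equation*}
This identity is the starting point: I would regard both sides as functions of $x$ and differentiate through.

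Next I would differentiate the identity $\nabla_{\by} f(x, \bg(x)) = \mathbf{0}$ with respect to $x$, invoking the chain rule on the composite map $x \mapsto (x, \bg(x)) \mapsto \nabla_{\by} f(x, \bg(x))$. This yields
\begin{equation*}
  \frac{\partial}{\partial x} \nabla_{\by} f(x, \bg(x)) + \nabla^{2}_{\by\by} f(x, \bg(x))\, \bg'(x) = \mathbf{0},
\end{equation*}
which in the notation of the lemma reads $f_{XY}(x, \bg(x)) + f_{YY}(x, \bg(x))\, \bg'(x) = \mathbf{0}$. Rearranging and left-multiplying by $f_{YY}(x, \bg(x))^{-1}$ gives the claimed formula
\begin{equation*}
  \bg'(x) = -\, f_{YY}(x, \bg(x))^{-1} f_{XY}(x, \bg(x)).
\end{equation*}

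The main obstacle, and the reason one typically invokes the implicit function theorem rather than just chain-ruling, is to justify (a) that $\bg$ is actually differentiable as a function of $x$ in a neighborhood of the point of interest, and (b) that the Hessian $f_{YY}(x, \bg(x))$ is invertible so that the final inversion makes sense. Both are supplied by the implicit function theorem once we assume the second-order sufficient condition $f_{YY}(x, \bg(x)) \succ 0$ (which is natural since $\bg(x)$ is a minimizer and, in our application, $f$ is the strictly convex SVR objective of~\eqref{eq.svr}). Under these hypotheses, applying the implicit function theorem to $F(x, \by) \doteq \nabla_{\by} f(x, \by)$ at $(x, \bg(x))$ produces a $C^1$ local inverse defining $\bg$ implicitly, and differentiating $F(x, \bg(x)) = \mathbf{0}$ as above yields the result. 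I would therefore structure the write-up as: state the smoothness and non-singularity hypotheses, verify the stationarity identity, invoke the implicit function theorem to obtain differentiability of $\bg$, and conclude by the chain-rule computation above.
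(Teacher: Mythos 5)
Your proof is correct and follows essentially the same route as the paper's: differentiate the stationarity identity $\nabla_{\by} f(x,\bg(x)) = \mathbf{0}$ with respect to $x$ via the chain rule and solve for $\bg'(x)$. The only difference is that you additionally invoke the implicit function theorem to justify the differentiability of $\bg$ and the invertibility of $f_{YY}$, which the paper's proof takes for granted; this is a welcome tightening rather than a departure.
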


\begin{proof}
We have:
\begin{align}
f_{Y}(x, \bg(x)) \doteq {\nabla_{Y} f(x, \by)}|_{\by=\bg(x)} &= 0 \\
\frac{d}{dx} f_{Y}(x, \bg(x)) &= 0 \\
\therefore f_{XY}(x, \bg(x)) + f_{YY}(x, \bg(x)) \bg'(x) &= 0 \\
\frac{d}{dx}\bg(x) = - {f_{YY}(x, \bg(x))}^{-1} f_{XY}(x, \bg(x))
\end{align}
\qed
\end{proof}

Interestingly, replacing $\argmin$ with $\argmax$ in the above lemma yields the same gradient, which follows from the proof that only requires that $\bg(x)$ be a stationary point. So the result holds for both $\argmin$ and $\argmax$ optimization problems.

Using Lemma~\ref{lem:vector_argmin} we can compute the gradient of the rank pooling temporal encoding function with respect to a parameterized representation of the feature vectors. We only consider the case of a single scalar parameter $\theta$. The extension to a vector of parameters can be done elementwise.

\begin{COROLLARY}
Let $\theta \in \reals$ be a parameter and let $\langle \bv_t \rangle$ be a sequence where the $\bv_t$ are functions of $\theta$. Define $f(\theta, \bu)$ to be the objective of the rank pooling optimization problem~\eqnref{eq.svr}. That is,
\begin{align*}
  f(\theta, \bu) &= \frac{1}{2}\|\bu\|^{2} + \frac{C}{2} \sum_{t=1}^J \Big[ |t - \bu^\top \bv_t| - \epsilon\Big]_{\geq 0}^2.
\end{align*}
And let $\bu^\star = \argmin_{\bu} f(\theta, \bu)$. Then 
\begin{align*}
  \frac{d \bu^\star}{d\theta} &= 
  \left(I + C \sum_{e_t \neq 0} \! \bv_t \bv_t^\top \right)^{\!-1}
  \! \left(C \sum_{e_t \neq 0} \! e_t \frac{d v_t}{d\theta}
  - \bu^{\star\top} \frac{d v_t}{d\theta} \bv_t\right)
\end{align*}
where
\begin{align*}
e_t &\doteq \begin{cases}
  \bu^{\star\top} \bv_t - t - \epsilon, & \text{if $\bu^{\star\top} \bv_t - t \geq \epsilon$}
  \\
  \bu^{\star\top} \bv_t - t + \epsilon, & \text{if $t - \bu^{\star\top} \bv_t \geq \epsilon$} 
  \\
  0, & \text{otherwise.}
  \end{cases}
\end{align*}
\label{lem:rank_pool_gradient}
\end{COROLLARY}

\begin{proof}
Follows from Lemma~\ref{lem:vector_argmin} with
\begin{align}
  f_{U}(\theta, \bu^\star) &= \bu^\star - C \sum_{t=1}^{J} e_t \bv_t
  \\
  f_{UU}(\theta, \bu^\star) &= I + C \sum_{e_t \neq 0} \bv_t \bv_t^\top
  \\
  f_{\theta U}(\theta, \bu^\star) &= - C \sum_{e_t \neq 0} e_t \frac{d \bv_t}{d \theta} + 
  \bu^{\star\top} \frac{d \bv_t}{d \theta} \bv_t
\end{align}
\qed
\end{proof}

In the subsections below we discuss the specifics of learning the parameters of our two parametric discriminative models ($\btheta$ and $W$).

\subsection{Learning the parameter of end-to-end trainable rank pooled CNN}
\label{sec.sgd.cnn}

Now we present how to learn the parameters of the CNN ($\btheta$) and the classifier parameters.
Consider again the learning problem defined in \eqnref{eqn:learning}. 
The derivative with respect to $\bbeta$, which only appears in the upper-level problem, is straightforward and well known.
Using the result of Corollary~\ref{lem:rank_pool_gradient}, we compute $\frac{d \bu^{(i)}}{d\theta}$ for each training example and hence the gradient of the objective via the chain rule.%
We then use stochastic gradient descent (SGD) to learn all parameters jointly.

Consider a single scalar weight update in the CNN. 
Then, again using Lemma~\ref{lem:rank_pool_gradient} we have
\begin{multline}
\frac{d \bu^{(i)}}{d\theta} =
  \left(I + C \sum_{e_t \neq 0} \! \bv_t \bv_t^\top \right)^{\!-1} 
  \\
  \left(C \sum_{e_t \neq 0} \! e_t \psi'_0(\bx_t; \theta) - \bu^\top \psi'_0(\bx_t; \theta)\bv_t\right)
\label{eqn:gradient}
\end{multline}
Here $\psi_{0}^\prime(\bx_t;\theta)$ is the derivative of the element feature function. 
In the context of CNN-based features for encoding video frames the derivative can be computed by back-propagation through the network.
Note that the \texttt{rank-pool} objective function is convex and allows us to solve it efficiently.
However, it does include a set of non-differentiable points but we did not find this to cause any practical problems during optimization.

\subsection{Learning the parameter $W$ of discriminative rank pooling}
\label{sec.sgd.dis}

Recall, in discriminative  rank pooling network model, the sequence of vectors $X$ is processed by optimizing \eqnref{eq.svr} to get $\bu$, where  $\bv_t = \psi(W \bx_t)$. Objective is to learn the classifier parameters and the parameter $W$ jointly.
The derivative with respect to classifier parameter $\bbeta$, which only appears in the upper-level problem, is straightforward and well known.
However, the partial derivative w.r.t. $W$ is more challenging since $\bu$ is a complicated function of $W$ defined by \eqnref{eq.svr}, which involves solving an argmin optimization problem as before. 
Thus we have to differentiate \emph{through} the argmin function of the rank pooling problem using Lemma~\ref{lem:rank_pool_gradient}.

Recall, we have $\bv_t = \psi(W \bx_t)$ where $\psi(\cdot)$ acts elementwise. From Lemma~\ref{lem:rank_pool_gradient} we have for parameter $W_{ij}$
\begin{align}
  \frac{\partial \bu}{\partial W_{ij}} &= 
  \left(I + C \sum_{e_t \neq 0} \! \bv_t \bv_t^\top \right)^{\!-1}
  \! \left(C \sum_{e_t \neq 0} \! e_t \frac{\partial v_t}{\partial W_{ij}}
  - \bu^{\top} \frac{\partial v_t}{\partial W_{ij}} \bv_t\right)
  \label{eqn:du_dW_full}
\end{align}
where the $k$-th element of $\frac{\partial v_t}{\partial W_{ij}}$ is
\begin{align}
  \left(\frac{\partial \bv_t}{\partial W_{ij}}\right)_{\!k} &= \begin{cases}
    \psi'(W \bx_t)_{[k]} \bx_{t[j]}, & \text{if $k = i$} \\
    0, & \text{otherwise.}
    \end{cases}
\end{align}
Here the subscript $[i]$ denotes the $i$-th element of the associated vector.

\subsection{Optimization difficulties}
\label{sec.opt.diff}
One of the main difficulties for learning the parameters of high-dimensional temporal encoding functions (such as those based on CNN features) is that the gradient update in \eqnref{eqn:gradient} requires the inversion of the Hessian matrix $f_{UU} = (I + C \sum_{e_t \neq 0} \bv_t \bv_t^\top)$. 
One solution is to use a diagonal approximation of the Hessian, which is trivial to invert.
For instance let us compute the gradient of discriminative rank pooling model using the diagonal approximation.
Considering the derivative of the $k$-th element of $\bu$ and approximating the inverse of the first term in \eqnref{eqn:du_dW_full} by its diagonal, we have
\begin{multline}
  \frac{\partial \bu_k}{\partial W_{ij}} =
  \left(\frac{1}{1 + C \sum_{e_t \neq 0} \bv_{t[k]}^2} \right) \times \\
  \left(C \sum_{e_t \neq 0} \! \left(\ind{i = k} e_t - \bu_i \bv_{t[k]}\right) \psi'_i(W \bx_t) \bx_{t[j]} \right)
\label{eqn:dukdWij}
\end{multline}

Now we have by the chain rule,
\begin{align}
\frac{\partial}{\partial W_{ij}} \log P(y \mid X) 
&=
\big(\nabla_{\bu} \log P(y \mid X)\big)^\top \frac{\partial \bu}{\partial W_{ij}}
\\
&= \left(\beta_y - \sum_{c} P(c \mid X) \beta_c\right)^\top \! \frac{\partial \bu}{\partial W_{ij}}
\label{eqn:dPdWij}
\end{align}

Let $\ones$ be the all-ones vector, let $K_t = \frac{\partial {\bv}_t }{\partial W}$ where $(K_t)_{[ij]} \doteq \frac{\partial \bv_{t[i]}}{\partial W_{ij}}$ and let $\hat{\beta}$ denote $\nabla_{\bu} \log P(y \mid X)$ scaled by the inverse diagonal hessian, i.e.,
\begin{align}
\hat{\beta}_{[i]} &= \frac{\beta_{y[i]} - \sum_{c} P(c \mid X) \beta_{c[i]}}{1 + C \sum_{e_t \neq 0} \bv_{t[i]}^2}
\end{align}
Then we can write \eqnref{eqn:dPdWij} more compactly as
\begin{align}
  \frac{\partial}{\partial W_{ij}} \log P(y \mid X) &=
  \sum_{e_t \neq 0} \left( \hat{\beta}_{[i]} e_t - \bu_i \hat{\beta}^\top \bv_t\right) K_{t[ij]}
\end{align}
and the (matrix) gradient with respect to all parameters as
\begin{align}
  \nabla_{W} \log P(y \mid X) &=
  C \sum_{e_t \neq 0} e_t K_t \odot (\hat{\beta} \ones^T) - 
    s_t K_t \odot (\bu \ones^T) 
\label{eq.der.completeW}  
\end{align}
where $\odot$ is the Hadamard product and $s_t \doteq \hat{\beta}^\top \bv_t$.

An alternative, for temporal encoding functions with certain structure like ours, namely where the hessian can be expressed as a diagonal plus the sum of rank-one matrices, the inverse can be computed efficiently using the Sherman-Morrison formula \cite{Golub},
\begin{lemma} \cite{Golub}
Let $H = I + \sum_{i=1}^n u_i v_i^\top \in \reals^{p \times p}$ be invertible. Define $H_0 = I$ and $H_m = H_{m-1} + u_m v_m^\top$ for $m = 1, \ldots, n$. Then
\begin{align}
  H_m^{-1} &= H_{m-1}^{-1} - \frac{H_{m-1}^{-1} u_m v_m^\top H_{m-1}^{-1}}{1 + v_m^\top H_{m-1}^{-1} u_m}
  \label{eqn:inv_update}
\end{align}
whenever $v_m^\top H_{m-1}^{-1} u_m \neq -1$.
\label{lem:inverse}
\end{lemma}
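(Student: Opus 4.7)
The plan is to establish the single-update Sherman--Morrison identity first and then note that the stated lemma is just its iterated application to the sequence $H_0, H_1, \ldots, H_n$.

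First I would prove the basic identity: for an invertible $A \in \reals^{p \times p}$ and vectors $u, v \in \reals^p$ with $1 + v^\top A^{-1} u \neq 0$,
\begin{equation*}
(A + u v^\top)^{-1} = A^{-1} - \frac{A^{-1} u v^\top A^{-1}}{1 + v^\top A^{-1} u}.
\end{equation*}
The verification is a direct computation: multiply $(A + u v^\top)$ on the left by the claimed inverse and simplify. The cross terms collapse because $v^\top A^{-1} u$ is a scalar, and one obtains $I$ after canceling numerator and denominator. Since the matrices are square, being a right inverse implies being a two-sided inverse, so this establishes the formula. As a byproduct, invertibility of $A + u v^\top$ is equivalent (given $A$ invertible) to $1 + v^\top A^{-1} u \neq 0$.

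Next I would apply this identity with $A = H_{m-1}$, $u = u_m$, $v = v_m$ to obtain exactly the recursion~\eqref{eqn:inv_update}. To make this step legitimate I need $H_{m-1}$ to be invertible for each $m = 1, \ldots, n$, which I would argue by induction on $m$. The base case $H_0 = I$ is trivial. For the inductive step, assume $H_{m-1}$ is invertible; then by the equivalence noted above, $H_m = H_{m-1} + u_m v_m^\top$ is invertible precisely when $1 + v_m^\top H_{m-1}^{-1} u_m \neq 0$, which is exactly the hypothesis of the lemma. Invertibility of $H = H_n$ is assumed from the outset, but the inductive argument also ensures each intermediate $H_m$ is invertible, which is what is needed for~\eqref{eqn:inv_update} to make sense at every step.

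The main obstacle is really just bookkeeping: ensuring the nondegeneracy condition $v_m^\top H_{m-1}^{-1} u_m \neq -1$ propagates invertibility correctly through the recursion, so that each application of the one-step Sherman--Morrison identity is valid. The algebraic content is a single line of matrix multiplication; everything else is a straightforward induction.
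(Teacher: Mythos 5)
Your proposal is correct and follows essentially the same route as the paper, which simply states that the result ``follows from repeated application of the Sherman--Morrison formula.'' You merely fill in the details the paper leaves implicit: the direct verification of the one-step identity and the induction ensuring each intermediate $H_{m-1}$ is invertible so that every application of the update is legitimate.
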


\begin{proof}
Follows from repeated application of the Sherman-Morrison formula.
\end{proof}

Since each update in \eqnref{eqn:inv_update} can be performed in $O(p^2)$ the inverse of $H$ can be computed in $O(np^2)$, which is acceptable for many applications. Our experiments include results onbtained by both the diagonal approximation and full inverse.

\section{HRP: Hierarchical rank pooling}
\label{sec:hrp}
In this section we present our \emph{hierarchical rank pooling} (HRP) network for video classification. 
HRP is an unsupervised temporal encoding network which allows us to obtain high capacity temporal encoding.

Even with a rich feature representation of each frame in a video sequence, such as derived from a deep convolutional neural network (CNN) model \cite{Krizhevsky2012}, the shallow rank pooling method \cite{Fernando2015,Fernando2016} may not be able to adequately model the dynamics of complex activities over long sequences. 
As such, we propose a more powerful yet simple scheme for encoding the dynamics of rich features of complex video sequences. 
Motivated by the success of hierarchical encoding of deep neural networks \cite{Krizhevsky2012,Girshick2014}, we extend rank pooling operator to encode dynamics of a sequence at multiple levels in a hierarchical manner. 
Moreover, at each stage, we apply a non-linear feature transformation to capture complex dynamical behaviour. We call this method the \emph{hierarchical rank pooling}.

Our main idea is to perform rank pooling on sub-sequences of the video. 
Each invocation of rank pooling provides a fixed-length feature vector that describes the sub-sequence. 
Importantly, the feature vectors capture the evolution of frames within each sub-sequence. 
By construction, the sub-sequences themselves are ordered. 
As such, we can apply rank pooling over the generated sequence of feature vectors to obtain a higher-level representation. 
This process is repeated to obtain  dynamic representations at multiple levels for a given video sequence until we obtain a final encoding. 
To make this hierarchical encoding even more powerful, we apply a point-wise non-linear operation on the input to the rank pooling function. 
An illustration of the approach is shown in \figref{fig:hpooling}.

\label{sec.hrp}
\begin{figure}
 \centering
 \includegraphics[width=\linewidth]{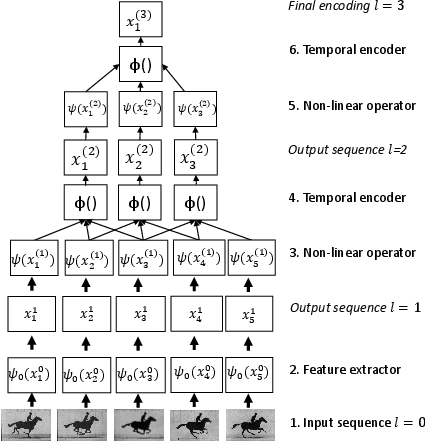}
 \caption{Two layer network of hierarchical rank pooling with window size three ($\M_\ell = 3$) and stride one ($S_\ell = 1$). \label{fig:hpooling}}
 \vspace{-2mm}
\end{figure}


We assume CNN features are extracted from a fixed CNN.
Using a slight change in the notation we denote this by $\bx_t^{(1)} = \psi_{0}(x_t^{(0)};\theta)$ where the $\theta$ is fixed.
In unsupervised hierarchical rank pooling method, we extract feature vectors from each of the frame resulting a sequence of vectors denoted by  
\begin{align}
X^{(1)} &= \langle \psi_{0}(x_1^{(0)}), \ldots, \psi_{0}(\bx_J^{(0))} \rangle.  
\end{align}
We then apply a non-linear transformation $\psi$ to each feature vector to obtain a transformed sequence 
\begin{equation}
\widetilde{X}^{(1)} = \langle \psi(\bx_1^{(1)}), \ldots, \psi(\bx_J^{(1)}) \rangle. 
\end{equation}
Next, applying rank pooling-based temporal encoding $\phi$ to sub-sequences of $\widetilde{X}^{(1)}$, we obtain a new sequence $X^{(2)}$ of feature vectors describing each video sub-sequence. 
The process of going from $X^{(1)}$ to $X^{(2)}$ constitutes the first layer of the temporal hierarchy. 
We now extend the process through additional rank pooling layers, which we formalize by the following definition. Indeed, in our implementation the temporal encoding function $\phi$ is \texttt{rank-pool} operator.

\begin{defn}[{\bf Rank Pooling Layer}]

Let $X^{(\ell)} = \langle\bx^{(\ell)}_1, \ldots, \bx^{(\ell)}_{J_{\ell}}\rangle$ be a sequence of $J_{\ell}$ feature vectors. Let $M_\ell$ be the window size and $S_\ell$ be a stride. For $t \in \{1, S_\ell + 1, 2S_\ell + 1, \ldots\}$ define transformed sub-sequences $\widetilde{X}^{(\ell)}_t = \langle\psi_{\ell}(\bx^{(\ell)}_t), \ldots, \psi_{\ell}(\bx^{(\ell)}_{t + M_\ell - 1})\rangle$, where $\psi_{\ell}(\cdot)$ is a point-wise non-linear transformation. Then the output of the $\ell$-th \emph{rank pooling layer} is a sequence $X^{(\ell + 1)} = \langle\ldots, \bx^{(\ell + 1)}_t, \ldots\rangle$ where $\bx^{(\ell + 1)}_t = \phi(\widetilde{X}^{(\ell)}_t)$ is a temporal encoding of the transformed sub-sequence $\widetilde{X}^{(\ell)}_t$ obtained by \texttt{rank-pool} operator.
\end{defn}

Each successive layer in our rank pooling hierarchy captures the dynamics of the previous layer. 
The entire hierarchy can be viewed as applying a stack of non-linear ranking functions on the input video sequence and shares some conceptual similarities with deep neural networks. 
A simple illustration of a two-layer hierarchical rank pooling network is shown in \figref{fig:hpooling}.
By varying the stride and window size for each layer, we control the depth of the rank pooling hierarchy. There is no technical reason to limit the number of layers. 

To obtain the final vector representation $\bx^{(L + 1)}$, we construct the sequence for the final layer $X^{(L)}$, and encode the whole sequence $X^{(L)}$ with \texttt{rank-pool} operator $\phi(\widetilde{X}^{(L)})$. 
In other words, the last layer in our hierarchy produces a single temporal encoding of last output sequence $\widetilde{X}^{(L)}$ using \texttt{rank-pool} operator. 
We use this final feature vector $\bx^{(L + 1)}$ of the video as its representation, which is then classified by a SVM classifier.

\subsection{Capturing non-linear dynamics with non-linear feature transformations}
\label{sec:method:non-linear-maps}

Usually, video sequence data contains complex dynamic information that cannot be captured simply using linear methods such as linear SVR. 
We believe that the dynamics captured by standard SVR objective reflects only linear dynamics as the SVR function is  linear.
To obtain non-linear dynamics, one option is to use non-linear feature maps and transform the input features by a non-linear operation.
Here we transform the input vectors $\bx_t$ by a non-linear operation $\psi(\bx_t)$ before applying SVR based rank pooling (Equation~\eqref{eq.svr}).
In the literature, Signed Square Root (SSR) and Chi-square feature mappings are used to obtain good results. 
Neural networks employ sigmoid and hyperbolic tangent functions to model non-linearity. 
The advantage of SSR is exploited by Fisher vector-based object recognition as well as in activity recognition \cite{Fernando2015,wang2013action}. 
When CNN features are used to represent frames, we suggest to consider positive activations separately from the negative activations. 
Typically the rectification applied in CNN architectures keeps only the positive activations, \ie, $\psi(\bx) = \max\{\zeros,\bx\}$. 
However, we argue that negative activations may also contain some useful information and should be considered.
Therefore, we propose to use the following non-linear function on the activations of fully connected layers of the CNN architecture. 
We call this operation the \emph{sign expansion root (\textbf{SER}).
\begin{align}
\psi(x) &= \left( \sqrt{\max\{0, x\}}, \sqrt{\max \{0, -x\}} \right)
 \label{eq.signexpansion}
\end{align}
This operation doubles the size of the features space allowing us to capture important non-linear information, one for positives and the other for negatives. 
The square-root operation takes care of projecting features to a some unknown non-linear feature space.}

So far in this \secref{sec:hrp}, we have described how to represent a video by a fixed-length descriptor using hierarchical rank pooling in an unsupervised manner. These descriptors can be used to learn an SVM classifier for activity recognition. The forward pass algorithm for hierarchical rank pooling is shown in Algorithm~\ref{alg:fwd_pass}.

\RestyleAlgo{boxruled}
\begin{algorithm}[h!]
  \begin{algorithmic}[1]
    \STATE{extract CNN features, $X^{(1)} = \langle\bx^{(1)}_1, \bx^{(1)}_2, \ldots, \bx^{(1)}_J\rangle$}
    \FOR{each rank pooling layer, $\ell = 1 : L-1$}
    \STATE{generate transformed sub-sequences $\widetilde{X}^{(\ell)}_t$ as $\langle \psi(\bx^{(\ell)}_t) \rangle$}
    \STATE{rank pool each sub-sequence, $\bx^{(\ell + 1)}_t = \phi(\widetilde{X}^{(\ell)}_t)$}
    \STATE{construct ${X}^{(\ell+1)}$ as $\langle \ldots, \bx^{(\ell + 1)}_t, \ldots \rangle$}
    \ENDFOR
    \STATE{get video representation as $\bx^{(L+1)} = \phi(\widetilde{X}^{(L)})$}
  \end{algorithmic}
  \caption{\label{alg:fwd_pass} Hierarchical Rank Pooling Forward Pass.}
\end{algorithm}

\section{Experiments}
\label{sec:experiments}

We evaluate proposed methods using four activity and action recognition datasets. 
We follow exactly the same experimental settings per dataset, using the same training and test splits as described in the literature. Now we give some details of these datasets (also see~\figref{fig.datasets}).

\noindent\textit{HMDB51 dataset \cite{Kuehne2011}} is a generic action classification dataset consists of 6,766 video clips divided into 51 action classes. Videos and actions of this dataset are challenging due to various kinds of camera motions, viewpoints, video quality and occlusions. Following the literature, we use a one-vs-all multi-class classification strategy and report the mean classification accuracy over three standard splits provided by Kuehne~\etal \cite{Kuehne2011}.\\ 

\noindent\textit{Hollywood2 dataset} is created by Laptev~\etal \cite{Laptev2008} using 69 different Hollywood movies that include 12 human action classes. It contains 1,707 video clips in which 823 clips are dedicated for training and 884 clips for testing. The performance is measured by average precision. The mean average precision (mAP) is reported over all classes, as in \cite{Laptev2008}.\\ 

\noindent\textit{UCF101 dataset \cite{soomro2012ucf101}} is an action recognition dataset of realistic action videos, collected from YouTube, consists of 101 action categories. It has 13,320 videos from 101 diverse action categories. The videos of this dataset is challenging which contains large variations in camera motion, object appearance and pose, object scale, viewpoint, cluttered background and illumination conditions. It is one of the most challenging data set to date. It consist of three splits, in which we report the classification performance over all three splits as done in the literature.\\

\noindent\textit{UCF-sports dataset \cite{Rodriguez2008}} consists of a set of short video clips depicting actions collected from various sports. The clips were typically sourced from footage on broadcast television channels such as the BBC and ESPN. The video collection represents a natural pool of actions featured in a wide range of scenes and viewpoints. The dataset includes a total of 150 sequences of resolution $720 \times 480$ pixels. Classification performance is measured using mean per-class accuracy. We use provided train-test splits for training and testing.\\

\begin{figure}[t!]
\begin{center}
\subfloat[HMDB51]{\includegraphics[width=.9\columnwidth]{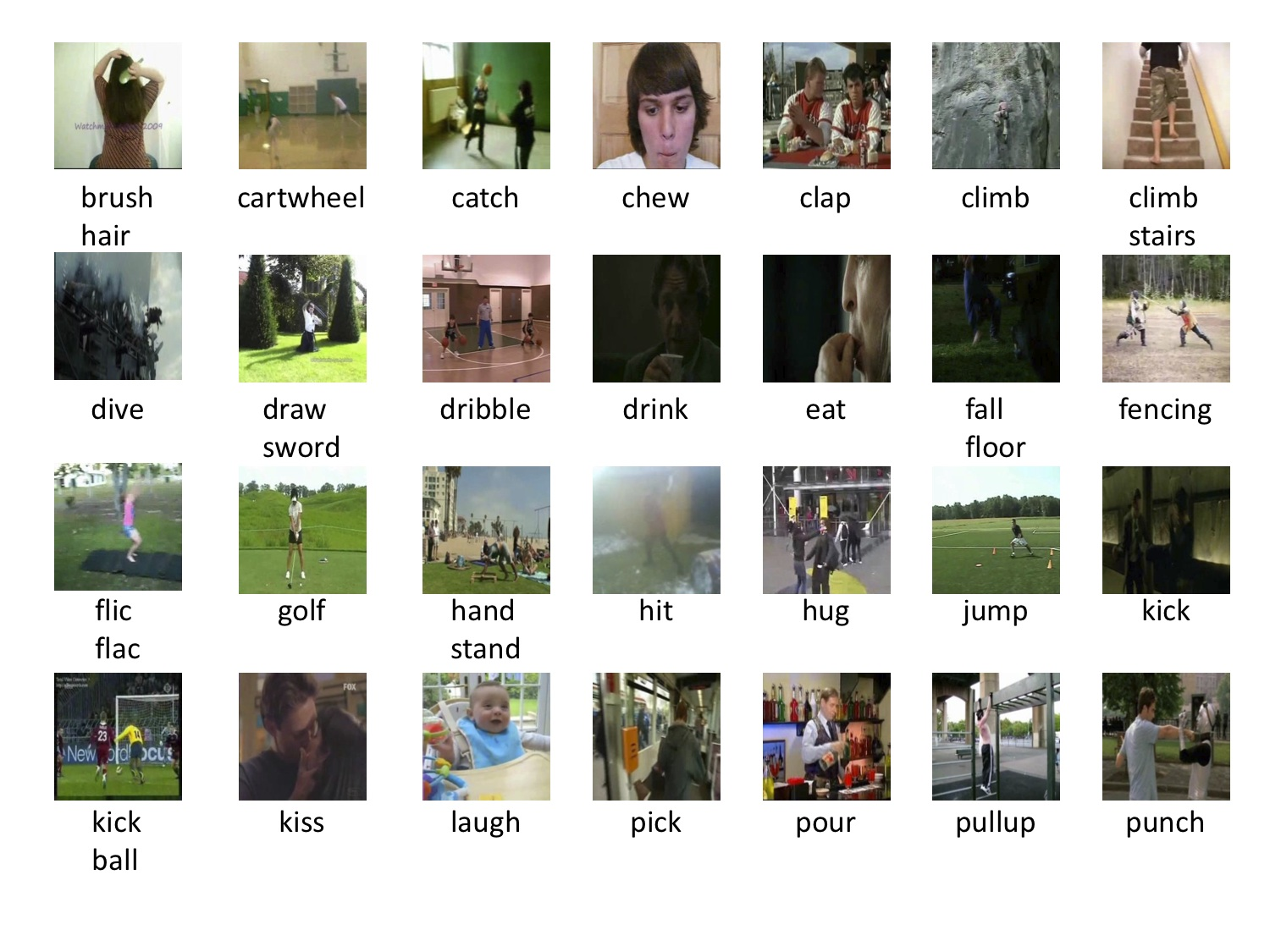}}\\
\subfloat[Hollywood2]{\includegraphics[width=.9\columnwidth]{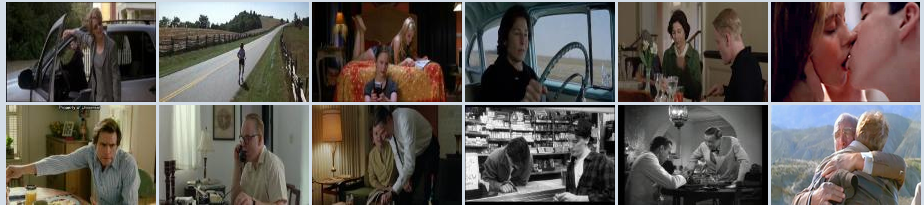}}\\
\subfloat[UCF101]{\includegraphics[width=.9\columnwidth]{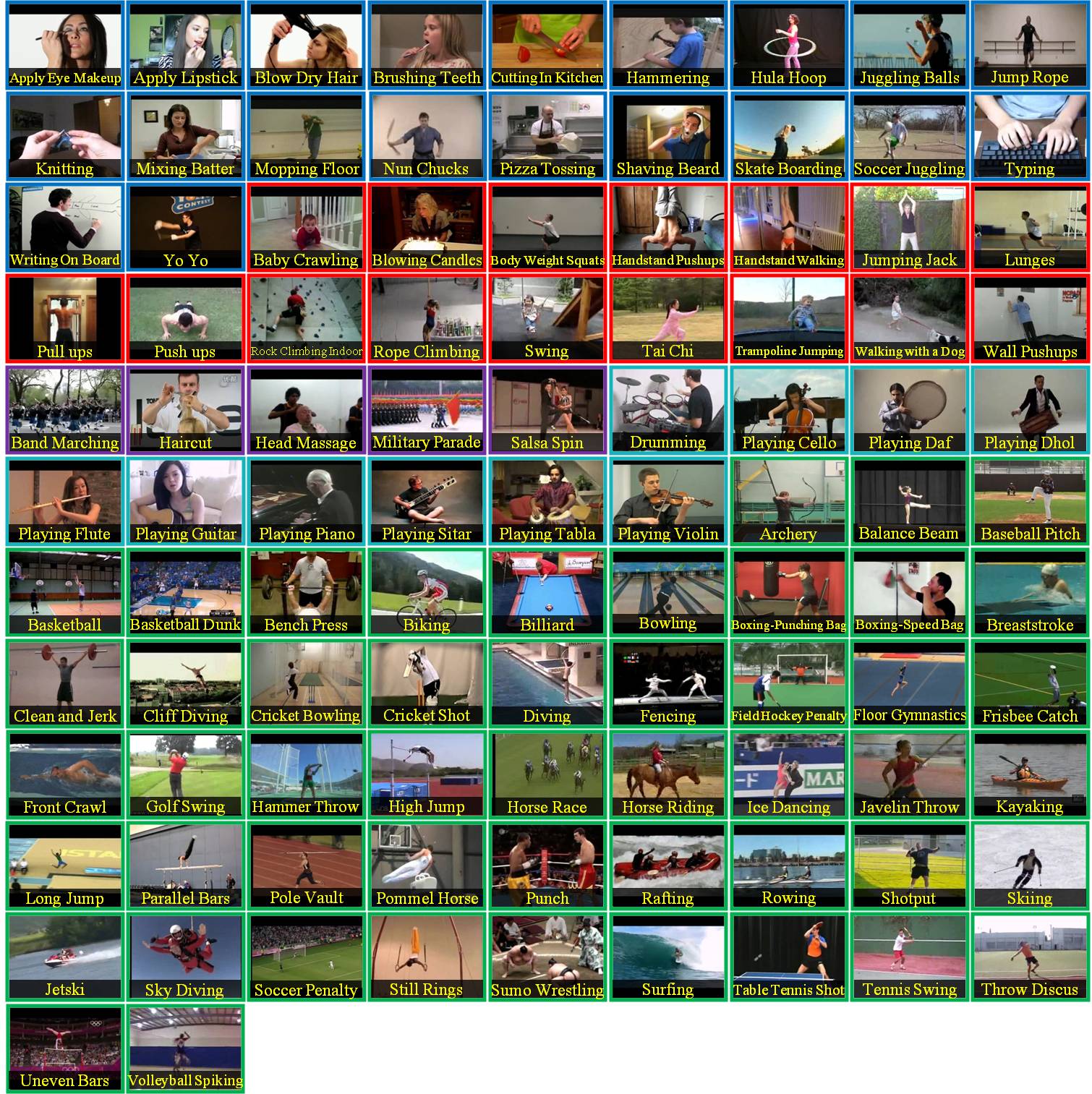}}\\
\subfloat[UCFSports]{\includegraphics[width=.9\columnwidth]{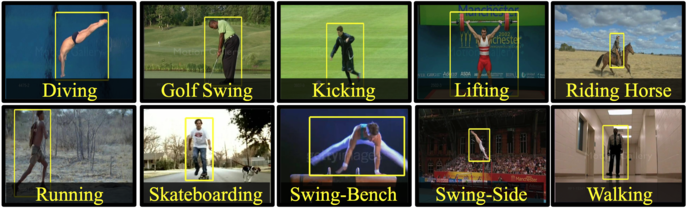}}
\caption{Example frames from (a) HMDB51 (b) Hollywood2  (c) UCF101 and (d) UCF-sports datasets from different action and activity classes.}
\label{fig.datasets}
\end{center}
\end{figure}

The rest of the experimental section is organised as follows. 
First in \secref{sec.exp.hrp.main} we provide a detailed evaluation of hierarchical rank pooling.
Then in \secref{sec.exp.disk.rp}, we evaluate the impact of discriminative rank pooling.
\secref{sec.exp.end.to.end} is dedicated to provide a detailed evaluation of end-to-end trainable rank pooled CNNs.
Finally, we compare with some state-of-the-art action recognition methods and position our contributions in~\secref{sec:experiment:soa}.
Implementation of our method is publicly available\footnote{\url{https://bitbucket.org/bfernando/hrp}}.

\subsection{Evaluating hierarchical rank pooling (HRP)}
\label{sec.exp.hrp.main}
First, we evaluate activity recognition performance using CNN features and hierarchical rank pooling (HRP) and then provide some detailed analysis.

\noindent
\textbf{Experimental details:} 
We utilize pre-trained CNNs without any fine-tuning. 
Specifically, for each video we extract activations from the VGG-16 \cite{Simonyan2014a} network's first fully connected layer (consisting of 4096 values, only from the central patch). 
We represent each video frame by this 4096 dimensional vector. Note that at this point, we do not use any ReLU \cite{Krizhevsky2012} non-linearity. As a result the frame representation vector contains both positive and negative components of the activations.

Unless otherwise specified, we use a window size $M_\ell$ of 20, with a stride $S_\ell$ of one and a hierarchy depth of two in all our experiments. 
We use a constant $C = 1$ parameter for SVR training (Lib-linear \cite{Fan2008}) to obtain the \rp-based temporal encoding as recommended in \cite{Fernando2016}. 
We test different non-linear SVM classifiers for the final classification always with $C = 1000$ (LibSVM \cite{Chang2011}) as this works well in practice. 
It should be noted that ideally, the best results can be obtained by cross-validation. 
However, as commonly done in state-of-the art action recognition methods \cite{wang2013action}, we use a fixed $C$ for LibSVM training.
In the case of multi-class classification, we use a one-against-rest approach and select the class with the highest score.
For rank pooling \cite{Fernando2015,Fernando2016} and trajectory extraction \cite{wang2013action} (in later experiments) we use the publicly available code from the authors.

\subsubsection{Comparing temporal pooling methods}
\label{sec.exp.hrp.eval}
\begin{table}[t]
\small
\centering 
\setlength{\tabcolsep}{3pt}
\begin{tabular}{lccc} \hline
{\sc Method} 		& Hollywood2 & HMDB51 & UCF101 \\ \hline
Average pooling 		& 40.9	& 37.1 & 69.3 \\ 
Max pooling 			& 42.4	& 39.1 & 72.5 \\ 
Tempo. pyramid (avg. pool)  	& 46.5  & 39.1 & 73.3 \\ 
Tempo. pyramid (max pool)  	& 48.7  & 39.8 & 74.8 \\ 
LSTM \cite{Srivastava2015}      & --    & 42.8 & 74.5 \\
Rank pooling 			& 44.2	& 40.9 & 72.2 \\ \hline
Recursive rank pooling          & 52.5	& 45.8 & 75.6 \\ 
Hierarchical rank pooling       & \textbf{56.8}	& \textbf{47.5} & \textbf{78.8} \\ \hline
Improvement & {\textcolor{red}{+8.1}} & {\textcolor{red}{+4.7}} & {\textcolor{red}{+4.0}} \\ \hline
\end{tabular}
\caption{Comparing several temporal pooling methods for activity recognition using VGG-16's fc6 features.}
\label{tab:baselines}
\end{table}

In this section we compare several temporal pooling methods using VGG-16 CNN features. We compare our hierarchical rank pooling with average-pooling, max-pooling, LSTM \cite{Srivastava2015}, two level temporal pyramids with mean pooling, two level temporal pyramids with max pooling, and vanilla rank pooling \cite{Fernando2015,Fernando2016}. To obtain a representation for average pooling, the average CNN feature activation over all frames of a video was computed. The max-pooled vector is obtain by applying the \emph{max} operation over each dimension of the CNN feature vectors from all frames of a given video. We also compare with a variant of hierarchical rank pooling called \emph{recursive rank pooling}, where the next layer's sequence element at time $t$ denoted by $\bx^{(\ell + 1)}_t$ is obtained by encoding \emph{all frames} of the previous layer sequence up to time $t$, \ie $\bx^{(\ell + 1)}_t = \phi( \langle\psi(\bx^{(\ell)}_1), \ldots, \psi(\bx^{(\ell)}_{t})\rangle)$ for $t = 2, \ldots, J_{\ell}$.

We compare these base temporal encoding methods on three datasets and report results in Table~\ref{tab:baselines}. 
Results show that the rank pooling method is only slightly better than max pooling or mean pooling when used with VGG16 features. 
We believe this is due to the limited capacity of rank pooling \cite{Fernando2015,Fernando2016}. 
Moreover, temporal pyramid seems to outperform rank pooling except for HMDB51 dataset.  
Moreover, as shown in Table~\ref{tab:baselines}, when we extend the rank pooling to \emph{recursive rank pooling}, we notice a jump in performance from 44.2\% to 52.5\% for Hollywood2 dataset and 40.9\% to 45.8\% for HMDB51 dataset. 
We also see a noticeable improvement in UCF101 dataset. 
Hierarchical rank pooling improves over rank pooling by a significant margin. 
The results suggest that it is important to exploit dynamic information in a hierarchical manner as it allows complicated sequence dynamics of videos to be expressed. 
To verify this, we also performed an experiment by varying the depth of the hierarchical rank pooling and reported results for one to three layers. 
Results are shown in Figure~\ref{fig:depth}.

\begin{figure}[t]
\centering 
 \subfloat[Hollywood2]{\includegraphics[width=0.5\linewidth]{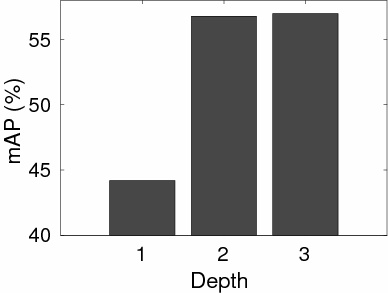}}
 \subfloat[HMDB-51]{\includegraphics[width=0.5\linewidth]{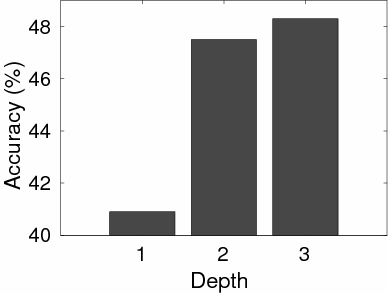}} 
 \caption{Activity recognition performance versus hierarchy depth on Hollywood2 and HMDB-51.}
 \label{fig:depth}
\end{figure}

As expected the improvement from depth of one to two is significant. Interestingly, as we increase the depth of the hierarchy to three, the improvement is marginal. Perhaps with only two levels, one can obtain a high capacity dynamic encoding.

\subsubsection{Evaluating the parameters of HRP}
Hierarchical rank pooling consists of two more hyper-parameters: (1) \textbf{window size} ($M_\ell$), \ie, the size of the video sub-sequences and (2) \textbf{stride} ($S_\ell$) of the video sampling. These two parameters control how many sub-sequences can be generated at each layer. In the next experiment we evaluate how performance varies with \textbf{window size} and \textbf{stride}. Results are reported in Figure~\ref{fig:stridewindow}(top). The window size does not seem to make a big impact on the results (1--2\%) for some datasets. However, we experimentally verified that a window size of 20 frames seems to be a reasonable compromise for all activity recognition tasks. The trend in Figure~\ref{fig:stridewindow}(bottom) for the stride is interesting. It shows that the best results are always obtained by using a small stride. Small strides generate more encoded sub-sequences capturing more statistical information.
\begin{figure}
\centering 
 {\includegraphics[width=0.3\linewidth,height=2cm]{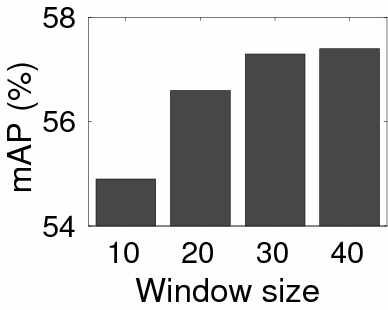}}
 {\includegraphics[width=0.3\linewidth,height=2cm]{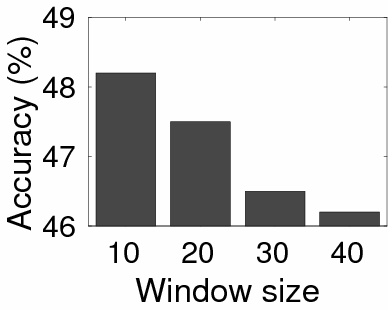}}
 {\includegraphics[width=0.3\linewidth,height=2cm]{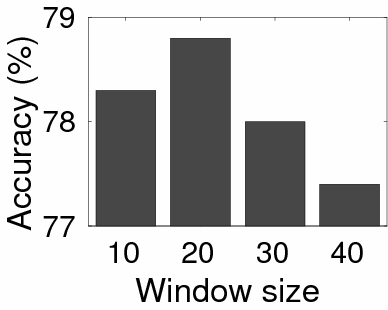}} \\ 
 
 \subfloat[Hollywood2]{\includegraphics[width=0.3\linewidth,height=2cm]{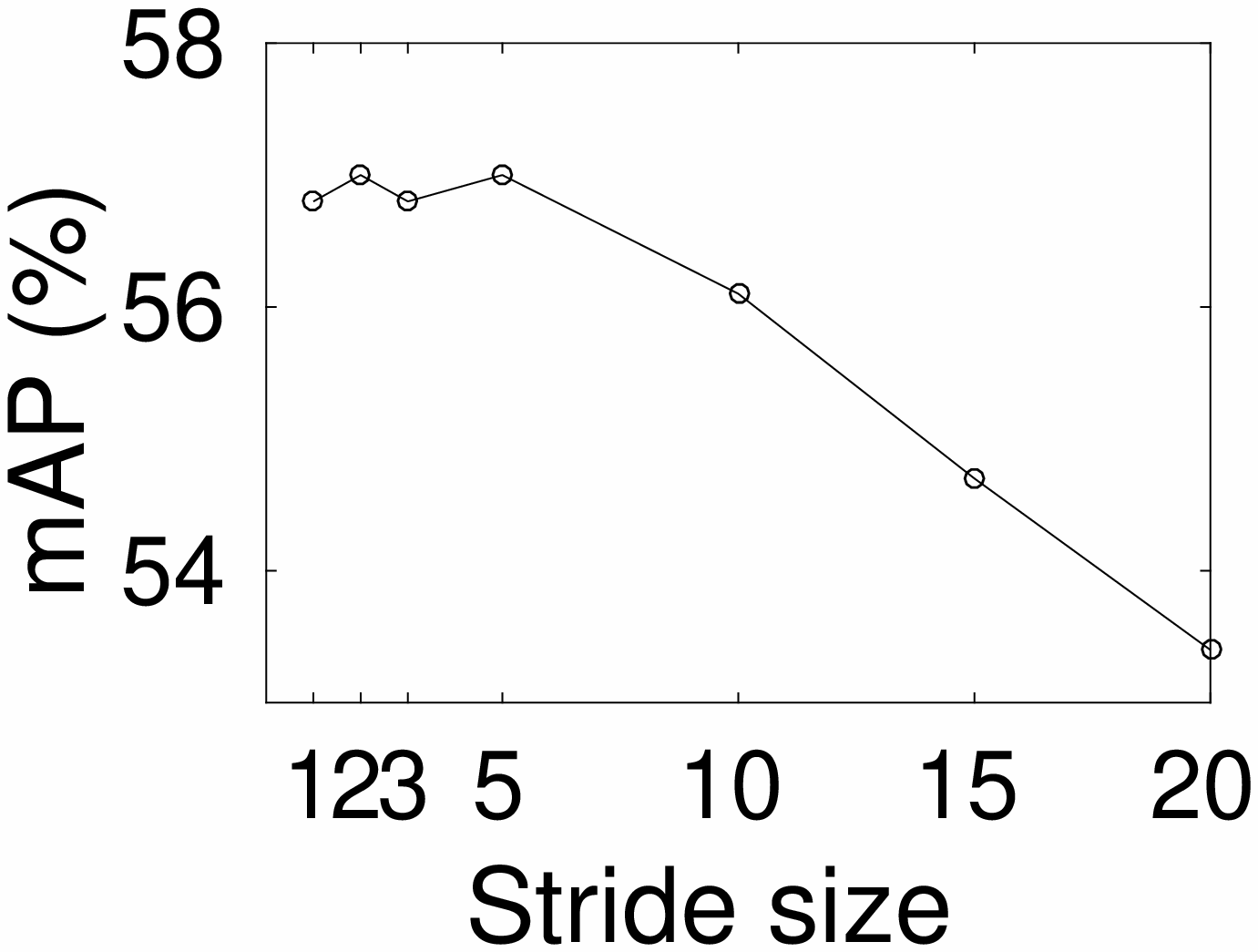}}
 \subfloat[HMDB-51]{\includegraphics[width=0.3\linewidth,height=2cm]{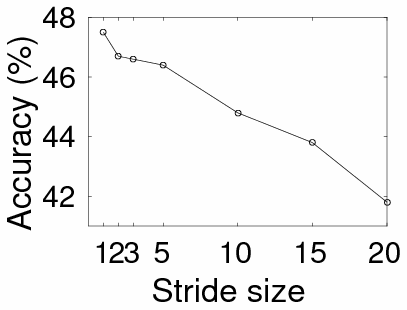}}
 \subfloat[UCF101]{\includegraphics[width=0.3\linewidth,height=2cm]{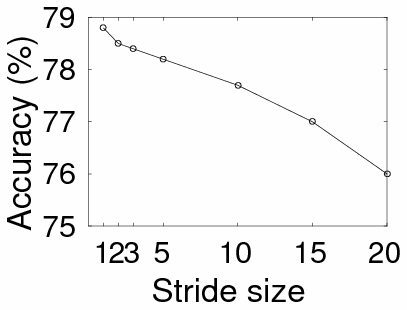}} 
 \caption{Activity recognition performance versus window size (top) and stride (bottom).}
 \label{fig:stridewindow}
\end{figure}
\subsubsection{The effect of non-linear feature maps on HRP}
Non-linear feature maps are important for modeling complex dynamics of an input video sequence. In this section we compare Sign Expansion Root (SER) feature map introduced in Section~\ref{sec:method:non-linear-maps} with the Signed Square Root (SSR) method, which is commonly used in the literature \cite{Perronnin2010}. Results are reported in Table~\ref{tab:featmaps.on.ranking}. As evident in the table, SER feature map is useful not only for hierarchical rank pooling, which gives an improvement of 6.3\% over SSR, but also for baseline rank pooling method, which gives an improvement of 6.8\%. This seems to suggest that there is valuable information in both positive and negative activations of fully connected layers. Furthermore, this experiment suggests that it is important to consider positive and activations separately for activity recognition.
\begin{table}
\small
\centering
\begin{tabular}{lcc} \hline
                                        &              & Hierarchical \\
{\sc Method} 				& Rank pooling & rank pooling \\ \hline
Signed square root (SSR) 		& 	44.2	& 50.5\\ 
Sign expansion root (SER)  		& 	51.0	& {\bf 56.8}\\  \hline
\end{tabular}
\caption{Effect non-linear feature maps during the training of rank pooling methods using Hollywood2 dataset.}
\label{tab:featmaps.on.ranking}
\end{table}
\subsubsection{The effect of non-linear kernel SVM on HRP}
In this experiment we evaluate several non-linear kernels that exist in literature and compare their effect when used with Hierarchical Rank Pooling method. We compare classification performance using different kernels (1) linear, (2) linear kernel with SSR,  (3) Chi-square kernel, (4) Kernelized SER  (5) combination of Chi-square kernel with SER. Results are reported in Table~\ref{tab:featmaps}. On all three datasets we see a common trend. First, the SSR kernel is more effective than not utilizing any kernel or feature map. Interestingly, on deep CNN features, Chi-square Kernel is more effective than SSR. Perhaps this is because the Chi-square kernel utilizes both negative and positive activations in a separate manner to some extent. The SER method seems to be the most effective kernel. Interestingly, applying SER feature map over Chi-square kernel seems to improve results further. We conclude that SER non-linear feature map is effective not only during the training of rank pooling techniques, but also for action classification specially when used with CNN activation features.   
\begin{table}
\small
\centering
\setlength{\tabcolsep}{3pt}
\begin{tabular}{l c c c } \hline
 				& Hollywood2 & HMDB51 & UCF101 \\
{\sc Kernel type} 		& (mAP \%) & (\%) & (\%) \\ \hline
Linear 				& 	45.1	& 	40.0 	 &  66.7\\ 
Signed square root (SSR) 	& 	48.6	& 	42.8 	 &  72.0\\ 
Chi-square kernel		& 	50.6	& 	44.2 	 &  73.8\\ 
Sign expansion root (SER) 	& 	54.0	& 	46.0	 &  76.6\\ 
Chi-square + SER		& \BF{56.8}	&  \BF{ 47.5}	 &  \BF{78.8}\\ \hline
\end{tabular}
\caption{Effect of non-linear SVM kernels on action classification with hierarchical rank pooling representation.}
\label{tab:featmaps}
\end{table}
Next we also evaluate the effect of non-linear kernels on final video representations when used with other pooling methods such as rank pooling, average pooling and max pooling. Results are reported in Table~\ref{tab:featmaps.on.other} on Hollywood2 dataset. A similar trend as in the previous table can be observed here. We conclude that our kernalized SER is useful not only for our hierarchical rank pooling method, but also for the other considered temporal pooling techniques. 
\begin{table}
\small
\centering 
\setlength{\tabcolsep}{2pt}
\begin{tabular}{l c c c c} \hline
{\sc Kernel type}			& Avg. pool & Max pool & Rank pool & Ours \\ \hline
Linear 		 			& 	38.1	& 	39.6 	 &  33.3 & 45.1\\ 
Signed square root (SSR) 		& 	38.6	& 	38.4 	 &  35.3 & 48.6\\ 
Chi-square kernel			& 	39.9	& 	41.1 	 &  40.8 & 50.6\\ 
Sign expansion root (SER)  		& 	39.4	& 	41.0	 &  37.4 & 54.0\\ 
Chi-square + SER			& \BF{40.9}	&   \BF{42.4}	 &  \BF{44.2} & \BF{56.8}\\ \hline
\end{tabular}
\caption{Effect of non-linear kernels on other pooling methods using Hollywood2 dataset (mAP \%).}
\label{tab:featmaps.on.other}
\end{table}
\subsubsection{Combining hierarchical rank pooled CNN features with improved trajectory features}
\label{sec:experiment:trj}
In this experiment we combine hierarchical rank pooled CNN features with the Improved Dense Trajectory (IDT) features (MBH, HOG, HOF) \cite{wang2013action}. 
The objective of this experiment is to show the complimentary nature of IDT and hierarchical rank pooled CNN features.
IDT are encoded with Fisher vectors \cite{Perronnin2010} at the frame level and then temporally encoded with rank pooling. 
Due to the very high dimensional nature of Fisher vectors, it is not practical to use hierarchical rank pooling over Fisher vectors. 
We utilize a Gaussian mixture model of 256 components to create the Fisher vectors. To keep the dimensionality manageable, we halve the size of each descriptor using PCA. This is exactly the same setup used by Fernando~\etal \cite{Fernando2015,Fernando2016}. For each dataset we report results on HOG, HOF and MBH features obtained with the publicly available code of rank pooling \cite{Fernando2015,Fernando2016}.
We construct a kernel gram matrix for each feature type (HOG, HOF, MBH, and CNN) and take the averaging of the kernels to fuse features. 
Results are shown in Table~\ref{tab:trjcnn}.
Hierarchical rank pooled (CNN) outperforms trajectory based HOG features on all three datasets. Furthermore, on UCF101 dataset, Hierarchical rank pooled (CNN) outperforms rank pooled HOF features. Nevertheless, trajectory based MBH features still dominate the best results for an individual feature. The combination of rank pooled trajectory features (HOG + HOF + MBH) with hierarchically rank pooled CNN features gives a significant improvement.  It is interesting to see that the biggest improvement is obtained in Hollywood2 dataset. On UCF-101 dataset the combination brings us an improvement of 4.2\% over rank pooled trajectory features. We conclude that our hierarchical rank pool features are complimentary to trajectory-based rank pooling. 
\begin{table}[t]
\small
\centering
\setlength{\tabcolsep}{2pt} 
\begin{tabular}{l c c c } \hline
{\sc Method} 				& Hollywood2 & HMDB51  & UCF101 \\ \hline
RP. (HOG) 			& 53.4		& 	44.1 	 &  	72.8	\\ 
RP. (HOF) 			& 64.0		& 	53.7 	 &  	78.3	\\ 
RP. (MBH) 			& \BL{65.8} 	& 	\BL{53.9}&  	\BL{82.6}\\ 
RP. (ALL) 			& 68.5		& 	60.0	 &  	86.5	\\ 
RP. (CNN) 			& 44.2		& 	40.9	 &  	72.2	\\ 
RP. (ALL+CNN	)		& 71.4		& 	63.0	 &  	88.1	\\  \hline
HRP. (CNN) 			& 56.8		&       47.5	 &  	78.8	\\ 
RP. (ALL)+ HRP (CNN) 	& \textbf{74.1}	&\textbf{65.0}	 &\textbf{90.7}	\\  \hline
\end{tabular}
\caption{Combining CNN-based Hierarchical Rank Pooling (HRP) with improved trajectory features encoded with Fisher vectors and Rank Pooling(RP).}
\label{tab:trjcnn}
\end{table}
\subsubsection{Combining with trajectory features}
We also apply hierarchical rank pooling over improved dense trajectories which are encoded with the bag-of-words.
For this experiment, we use MBH features and use a dictionary of size 4096 which is constructed with K-means.
Results are reported in Table~\ref{tbl.idt.bow}.
\begin{table}[h!]
\centering
\begin{tabular}{lcc}\hline
Method & UCF101 Acc. (\%) & HMDB51 Acc. (\%) \\
\hline
Average pooling  		& 72.3 & 45.0\\
Max pooling 			& 71.5 & 43.1\\
Rank pooling			& 77.5 & 48.1 \\
Hierarchical rank pooling 	& 82.1 & 54.2\\
\hline
\end{tabular}
\caption{Action classification performance using IDT (MBH) features encode with BOW (4096 dictionary) using different temporal pooling methods and SVM classifiers.}
\label{tbl.idt.bow}
\end{table}
As before, both average pooling and max pooling perform worst than the rank pooling method.
Hierarchical rank pooling obtains large improvement over rank pooling.
On HMDB51 dataset, the improvement over rank pooling is about 6\%. HRP obtains an improvement of 4.6\% on UCF101 over rank pooling.
It is interesting to see the impact of hierarchical rank pooling over deep features as well as traditional hand-crafted features such as dense trajectory features and bag-of-words encoding.
We conclude that the hierarchical rank pooling is effective not only on recent deep features, but also with more traditional IDT-based bag-of-words features.

\subsubsection{The impact of residual network features on HRP}
In this experiment, we evaluate the impact of Residual Network Features \cite{He2016} on action recognition using UCF101 and HMDB51 datasets.
Results for max pooling, average pooling, Rank pooling, and Hierarchical Rank pooling with ResNet features are shown in Table~\ref{tab.ucf101.resnet} for UCF101 and HMDB51 datasets.
For this analysis, we extract frame level ResNet features from the output of final pooling layer which has a dimensionality of 2048.
We compare our hierarchical rank pooling with max pooling method. For rank pooling we obtain classification accuracy of 84.0\% only using frame-level ResNet features on UCF101.
This is an improvement of 5.3\% over VGG-16 features. Similarly, for max pooling we obtain 78.8 \% which is an improvement of 6.3 \% over VGG-16.
Similar trends can be observed for HMDB51 dataset. 
In fact, for HMDB51, it seems the improvement from VGG-16 to ResNet features is significant (11.2 \% for average pooling, 11.1 \% for max pooling,  13.8 \% for rank pooling and 9.8 \% for hierarchical rank pooling).

In another experiment, we also used publicly available ResNet-152 networks that are finetuned for RGB stream~\cite{Feichtenhofer2016}. 
Then only using the center crop of UCF101 frames, we extract 2048 dimensional features per frame and experiment with several baseline methods.
For RNN and LSTM baselines, we use Keras~\cite{chollet2015keras} with hidden size of 256.
We report results in Table~\ref{tab.ucf101.resnet.ox}.
Interestingly, simple RNN and LSTM methods does not outperform max pooling or the average pooling results.
Rank pooling is better than max pooling while hierarchical rank pooling is significantly better than rank pooling.

We conclude that ResNet feature \cite{He2016} are useful for action and activity recognition and our proposed hierarchical rank pooling method is complimentary to both VGG-16 features \cite{Simonyan2014a} as well as ResNet features. 

\begin{table}[t]
\centering
\begin{tabular}{lcc}\hline
Method & UCF101 Acc. (\%) & HMDB51 Acc. (\%) \\
\hline
Average pooling  		& 76.5 & 48.3\\
Max pooling 			& 78.8 & 50.2\\
Rank pooling			& 81.0 & 54.7 \\
Hierarchical rank pooling 	& \textbf{84.0} & \textbf{57.3}\\
\hline
\end{tabular}
\caption{Action classification performance using non-fine-tuned ResNet features \cite{He2016} using different temporal pooling methods and SVM classifiers.}
\label{tab.ucf101.resnet}
\end{table}

\begin{table}[t]
\centering
\begin{tabular}{lcc}\hline
Method & UCF101  \\
\hline
Simple RNN 			& 74.8  \\
Simple LSTM 			& 75.9  \\
Stacking of two LSTMs 		& 75.3  \\
Average pooling  		& 79.1    \\
Max pooling 			& 81.3   \\
Rank pooling			& 82.1 \\
Hierarchical rank pooling 	& \textbf{85.6}\\
\hline
\end{tabular}
\caption{Action classification performance using fine-tuned ResNet-152 features \cite{Feichtenhofer2016} using spatial stream. We use different temporal pooling methods and compare results on UCF101 dataset.}
\label{tab.ucf101.resnet.ox}
\end{table}

\subsubsection{Confusions with the use of residual network features and HRP}
We also analyse the confusions made by ResNets when pooled using max operator and hierarchical rank pooling (see Figure~\ref{fig.confuse}). 
\begin{figure}[h!]
\centering 
 \includegraphics[width=0.49\linewidth]{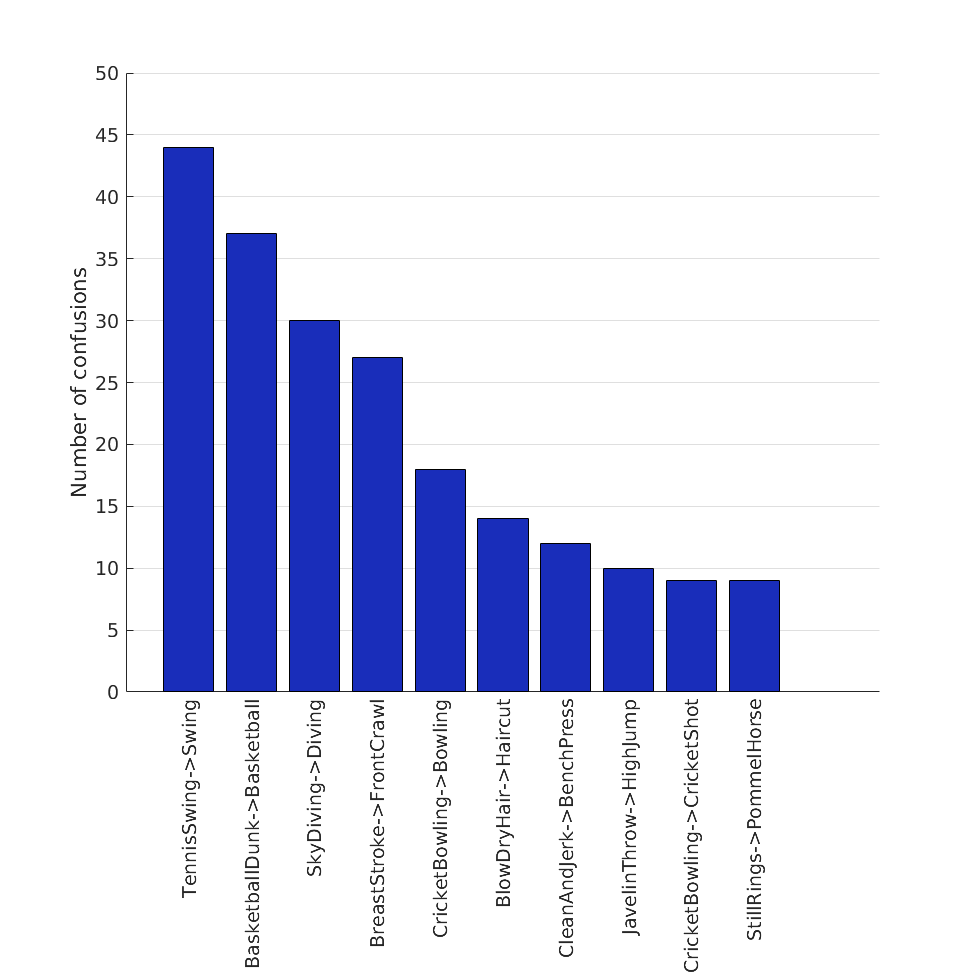}
 \includegraphics[width=0.49\linewidth]{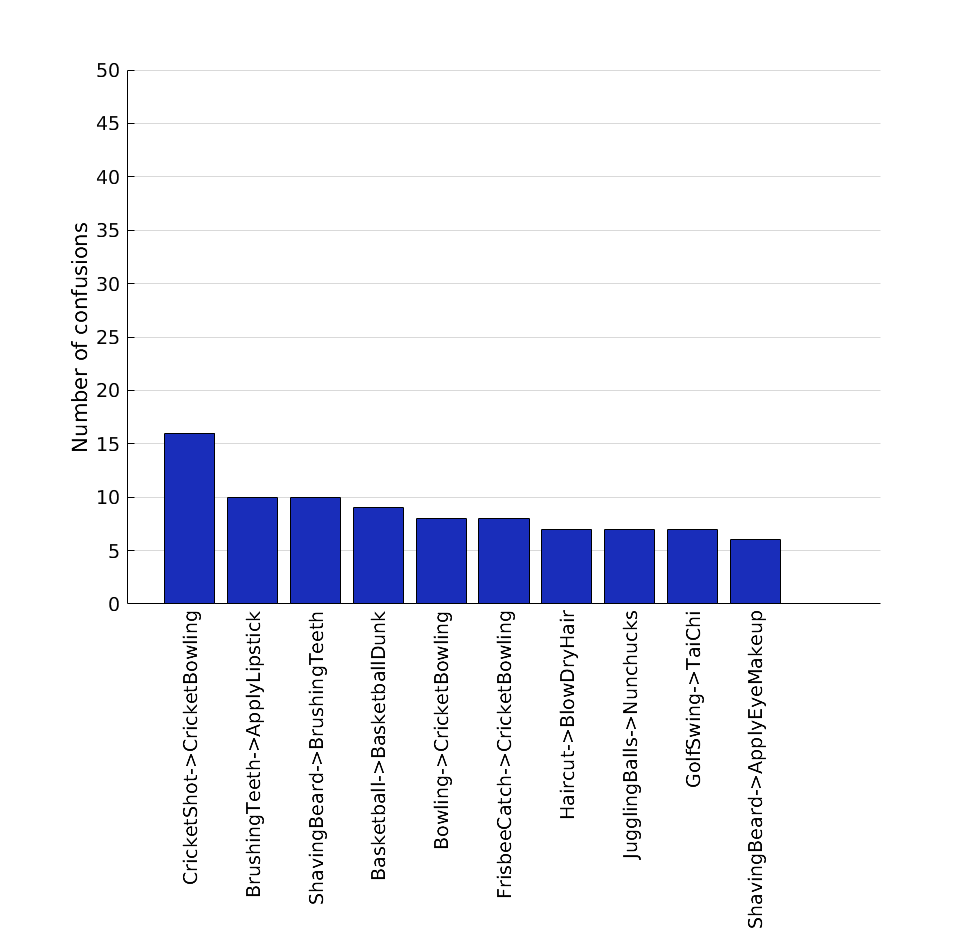}
 \caption{The most confused classes for max pooling (left) and hierarchical rank  pooling in the right. Max pooling makes most confusions.}
 \label{fig.confuse}
\end{figure}
The most confusing category for max pooling is \textit{Swing} for \textit{Tennis swing} (44 times) and \textit{Basketball} for \textit{Basketball-Dunk} (37 times) (--see Figure~\ref{fig.confuse.hrp} left). 
The most confusing  for hierarchical rank pooling is \textit{Cricket-Bowling} for \textit{Cricket-Shot} which happens only 16 times (--see Figure~\ref{fig.confuse.hrp} right). Generally, from the dynamics point of view, it is very hard to distinguish Cricket bowling from Cricket-Shot as indeed the Cricket-Shot just follows after Cricket-bowling. In particular, in many cases Cricket-bowling can be observed for Cricket-Shot video clips in UCF101 dataset. 
\begin{figure*}
\centering 
\includegraphics[width=0.45\linewidth]{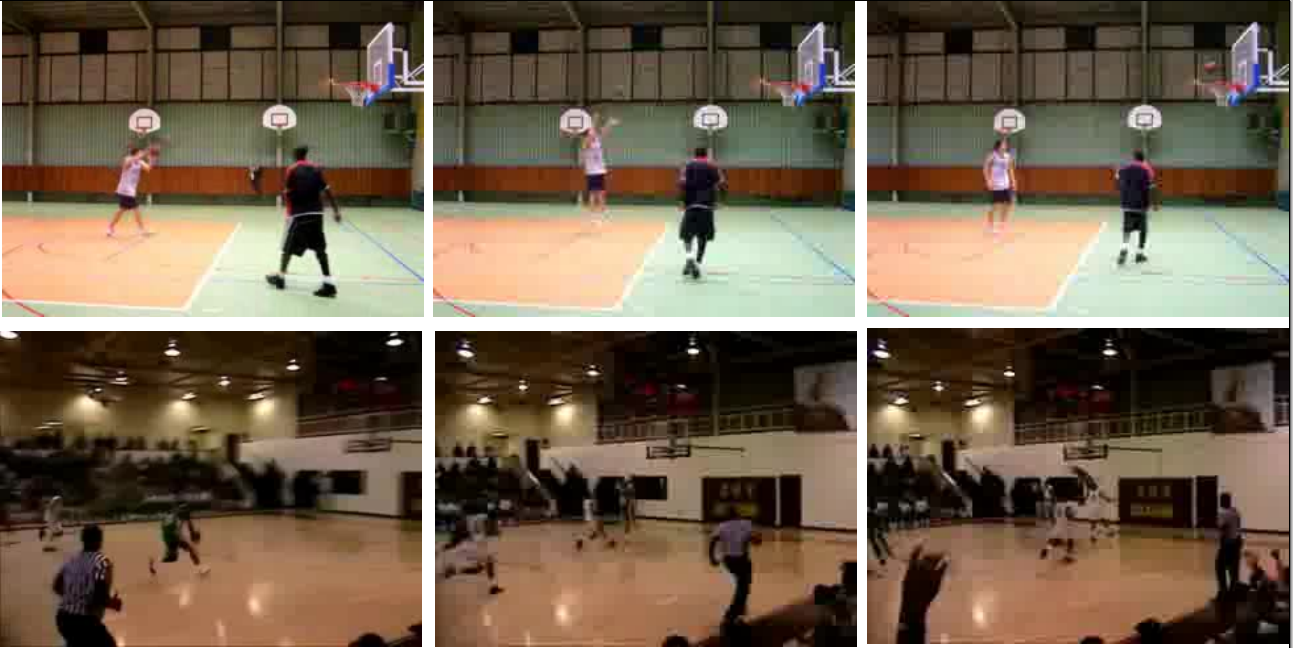}
\includegraphics[width=0.45\linewidth]{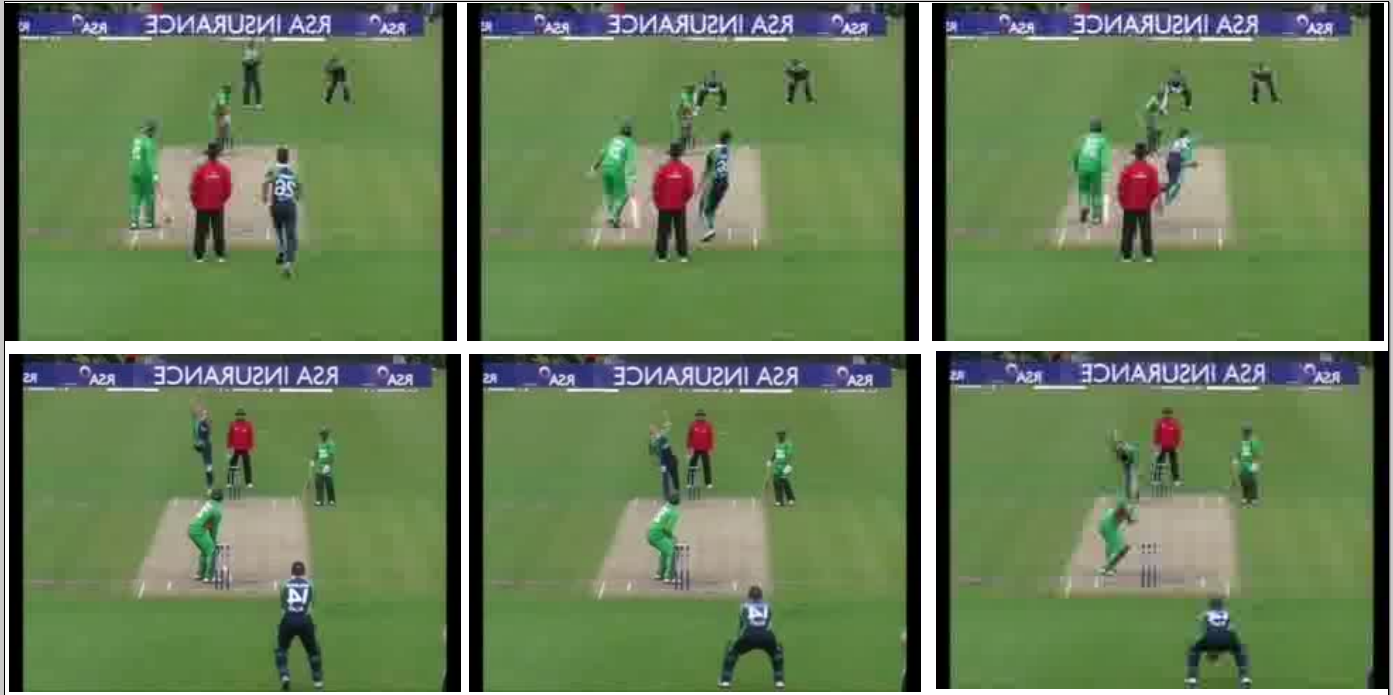} 
 \caption{Some of the most confusing classes for max pooling (left) and hierarchical rank pooling (right) respectively using ResNet features.}
 \label{fig.confuse.hrp}
\end{figure*}
\subsubsection{Impact of mid-level pooled features}
In this experiment, we evaluate the impact of low-level, learned mid-level, and higher level features of the hierarchical rank pooling.
We use non-fine-tuned ResNet-150 features~\cite{He2016} as the frame representation.
As before we use a window size of 20 and stride of 1.
After applying three layered hierarchical rank pooling, we use the first/second layer mid-level features as the mid-level sequence representations.
We randomly pick a mid-level feature vector to represent the entire video sequence.
To compare, we also pick a single frame feature to represent a video.
Furthermore, we randomly select 39 frames from each video and apply temporal max pooling and temporal average pooling as baselines.
We evaluate the impact of each mid-level feature and position the results with respect to the highest level hierarchical rank-pooled feature.
We repeat each experiment 10 times and report the mean and standard deviation in Table~\ref{impact.mid.level}.
\begin{table}[h!]
\centering
\begin{tabular}{lc}\hline
Level &  HMDB51 Acc. (\%) \\
\hline
0 - frame level			 & 38.8 $\pm$ 1.1\\ 	
temporal max pooling (39 frames) & 49.2 $\pm$ 1.6\\
temporal avg. pooling (39 frames)& 46.9 $\pm$ 0.5\\
$1^{st}$ layer mid-level feature & 41.9 $\pm$ 1.4\\
$2^{nd}$ layer mid-level feature & 47.5 $\pm$ 0.6\\
$3^{rd}$ layer Hierarchical rank pooling& 57.4\\
\hline
\end{tabular}
\caption{The impact of low-level, mid-level, and highest level sequence representation using hierarchical rank pooling method.}
\label{impact.mid.level}
\end{table}
Clearly, frame level feature performs the worst. This is expected.
Interestingly, using just a single random frame, we are able to obtain a mean classification accuracy of 38.8 \%.
First layer mid-level feature is better than frame level representation which obtains 41.9 \%. 
The second layer mid-level feature is even better which obtains 47.5 \%. 
This is an indication of the impact of mid-level dynamics.
Note that the temporal resolution of the first layer feature is 20 frames while the second layer mid-level feature has a resolution of 39 frames.
Most interestingly, the highest level features obtain 57.4 \% which is significant. 
However, the highest level feature has the full temporal resolution.
These results suggest that indeed, the hierarchical rank pooling is capable of capturing low-level, mid-level and higher level dynamics. 
The highest level temporal dynamics captured by HRP improves the activity recognition performance significantly.
\subsection{The effect of discriminative rank pooling}
\label{sec.exp.disk.rp}
In this experiment, we use discriminative rank pooling in the final layer of the hierarchical rank pooling network. In this case we first construct the sequence for the final layer $X^{(L)}$ and apply SSR feature map. Then we feed forward this sequence through the parameterized non-linear transform $\psi(W \bx^{(L)}_t)$, temporal encoder $\phi(\widetilde{X}^{(L)})$, and apply the classifier to get a classification score. During training we propagate errors back to the parametric non-linear transformation layer $\psi(\cdot)$ and perform a parameter update. We implement this optimization in a GPU.

We use MatConvNet \cite{Vedaldi2015} with stochastic gradient descent with variable learning rate starting at $10^{-3}$ and decreased to $10^{-5}$ in a logarithmic manner over epochs. We also use a momentum term of 0.9 and a weight decay of 0.0005. Our layer is implemented in matlab with GPU support. We evaluate the effect of this method only on the largest datasets, the HMDB51 and UCF101. We first construct the first layer sequence using hierarchical rank pooling. Then we learn the parameters $W$ using the labelled video data while keeping the CNN parameters fixed. We initialize the $W$ matrix to the identity and the classifier parameters to those obtained from the linear SVM classifier. Results are reported in Table~\ref{tab.discriminative}. We improve results by 2.4\% and 2.6\% over hierarchical rank pooling and a significant improvement of 9.0\% and 9.2\% over rank pooling using HMDB51 and UCF101 datasets respectively. During test time, we process a video at 120 frames per second.
\begin{table}
\small
\centering 
\begin{tabular}{l c c} \hline
{\sc Method} 				& HMDB51 	& UCF101 \\ \hline
Rank pooling 		 		& 40.9		& 	72.2\\ 
Hierarchical rank pooling 		& 47.5 		& 	78.8\\
Discriminative hierarchical rank pooling& \textbf{49.9 $\pm$ 0.08}  & 	\textbf{81.4 $\pm$ 0.04} \\ \hline
\end{tabular}
\caption{Effect of learning discriminative dynamics for hierarchical rank pooling on the HMDB51 and UCF101 dataset.}
\label{tab.discriminative}
\end{table}
\subsection{Comparing the effect of end-to-end trainable rank pooled CNN.}
\label{sec.exp.end.to.end}
\begin{table}[t]
\centering
\begin{tabular}{lc}\hline
Method & Acc. (\%)\\
\hline
Average pooling + svm & 67.1 \\
Max pooling + svm & 66.0 \\
Rank pooling + svm & 66.4 \\
Average pooled-cnn-end-to-end & 70.4 \\
Max pooled-cnn-end-to-end & 71.2 \\
Frame-level fine-tuning & 69.8 \\
Frame-level fine-tuning + Rank pooling & 72.9 \\
Rank-pooled-cnn-end-to-end  & \textbf{87.1} \\
\hline
\end{tabular}
\caption{Classification accuracies for action recognition on the ten-class UCF-sports dataset \cite{Rodriguez2008} using end-to-end video representation learning with rank pooling.}
\label{tab.ucfsports}
\end{table}

\begin{table*}[t!]
\centering
\begin{tabular}{lcccccccc}\hline
class & avg+svm & max+svm & rankpool+svm & avg+cnn & max+cnn & fn & fn+rankpool &  rankpool+cnn \\ \hline
AnswerPhone & 23.6 & 19.5 & \BF{35.3} & 29.9 & 28.0 & 27.4 & 34.3 & 25.0 \\
DriveCar & \BF{60.9} & 50.8 & 40.6 & 55.6 & 48.6 & 48.1 & 50.4 & 56.9 \\
Eat & 19.7 & 22.0 & 16.7 & \BF{27.8} & 22.0 & 21.1 & 23.1 & 24.2 \\ 
FightPerson & \BF{45.6} & 28.3 & 28.1 & 26.6 & 17.6 & 18.4 & 20.4 & 30.4 \\ 
GetOutCar & 39.5 & 29.2 & 28.1 & 48.9 & 43.8 & 43.1 & 45.3 & \BF{55.5} \\
HandShake & 28.3 & 24.4 & 34.2 & 38.4 & \BF{40.0} & 39.4 & 39.5 & 32.0 \\
HugPerson & 30.2 & 23.9 & 22.1 & 25.9 & 26.6 & 26.1 & 30.3 & \BF{33.2} \\
Kiss & 38.2 & 27.5 & 36.8 & 50.6 & 45.7 & 44.9 & 45.6 & \BF{54.2} \\
Run & 55.2 & 53.0 & 39.4 & 59.6 & 52.5 & 52.4 & 52.9 & \BF{61.0} \\
SitDown & 30.0 & 28.8 & 32.1 & 30.6 & 30.0 & 29.7 & 34.4 & \BF{39.6} \\
SitUp & 23.0 & 20.2 & 18.7 & 23.8 & \BF{26.4} & 24.1 & 25.1 & 25.4 \\
StandUp & 34.6 & 32.4 & 39.9 & 37.4 & 34.8 & 34.4 & 34.8 & \BF{49.9} \\ \hline
mAP & 35.7 & 30.0 & 31.0 & 37.9 & 34.7 & 34.1 & 36.3 & \BF{40.6} \\\hline
\end{tabular}
\caption{Classification performance in average precision for activity recognition on the Hollywood2 dataset \cite{Laptev2008} using end-to-end video representation learning with rank pooling.}
\label{tab.hollywood2}
\end{table*}

In this section we evaluate the effectiveness of end-to-end video representation learning with rank-pooling introduced in section~\ref{sec.endtoend}.
Due to the computational complexity, we only use moderate (Hollywood2) and small scale (UCF sports) action recognition dataset for evaluation.
We compare our end-to-end training of the rank-pooling network against the following baseline methods.

\textbf{avg pooling + svm:} We extract FC7 feature activations from the pre-trained Caffe reference model \cite{Jia2014} using MatConvNet \cite{Vedaldi2015} for each frame of the video. Then we apply temporal average pooling to obtain a fixed-length feature vector per video (4096 dimensional). Afterwards, we use a linear SVM classifier (LibSVM) to train and test action and activity categories.

\textbf{max pooling + svm:} Similar to the above baseline, we extract FC7 feature activations for each frame of the video and then apply temporal max pooling to obtain a fixed-length feature vector per video. Again we use a linear SVM classifier to predict action and activity categories.

\textbf{rank pooling + svm:} We extract FC7 feature activations for each frame of the video. We then apply time varying mean vectors to smooth the signal as recommended by \cite{Fernando2015}, and L2-normalize all frame features. Next, we apply the rank-pooling operator to obtain a video representation using publicly available code \cite{Fernando2015}. We use a linear SVM classifier applied on the L2-normalized representation to classify each video.

\textbf{frame-level fine-tuning (fn):} We fine-tune the Caffe reference model on the frame data considering each frame as an instance from the respective action category. Then we sum the classifier scores from each frame belonging to a video to obtain the final prediction.

\textbf{frame-level fine-tuning + rank-pooling (fn+rankpool):} We use the pre-trained model as before and fine-tune the Caffe reference model on the frame data considering each frame as an instance from the respective action category. Afterwards, we extract FC7 features from each video (frames). Then we encode temporal information of fine-tuned FC7 video data using rank-pooling. Afterwards, we use soft-max classifier to classify videos.

\textbf{end-to-end baselines:} We also compare our method with end-to-end trained max and average pooling variants. Here the pre-trained CNN parameters were fine-tuned using the classification loss.

The first five baselines can all be viewed as variants of the CNN-base temporal pooling architecture of~\figref{fig:cnnnet}. The differences being the pooling operation and whether end-to-end training is applied.

We compare the baseline methods against our rank-pooled CNN-based temporal architecture where training is done end-to-end. We do not sub-sample videos to generate fixed-length clips as typically done in the literature (e.g., \cite{Simonyan2014,Tran2015}). Instead, we consider the entire video during training as well as testing. We use stochastic gradient descent method without batch updates (i.e., each batch consists of a single video). We initialize the network with the Caffe reference model and use a variable learning rate starting from 0.01 down to 0.0001 over 60 epochs. We also use a weight decay of 0.0005 on an L2-regularizer over the model parameters. We explore two variants of the learning algorithm. In the first variant we use the diagonal approximation to the rank-pool gradient during the back-propagation. In the second variant we use the full gradient update, which requires computing the inverse of matrices per video (see \secref{sec.opt.diff}). For the UCF-sports dataset we use the cross-entropy loss for all CNN-based methods (including the baselines). Whereas for the Hollywood2 dataset, where performance is measured by mAP (as is common practice for this dataset), we use the hinge-loss.

Results for experiments on the UCF-sports dataset are reported in~\tabref{tab.ucfsports}. Let us make several observations. First, the performance of max, average and rank-pooling are similar when CNN activation features are used without end-to-end learning. Perhaps increasing the capacity of the model to better capture video dynamics (say, using a non-linear SVM) may improve results perhaps a future work. Second, end-to-end training helps all three pooling methods. However, the improvement obtained by end-to-end training of rank-pooling is about {\bf 21\%}, significantly higher than the other two pooling approaches. Moreover, the performance using the diagonal approximation is 87.0\% which is very close to the full gradient based approach. This suggests that the diagonal approximation is driving the parameters in a desirable direction and may be sufficient for a stochastic gradient-based method. Last, and perhaps most interesting, is that using state-of-the-art improved trajectory \cite{wang2013action} features (MBH, HOG, HOG) and Fisher vectors \cite{Perronnin2010} with rank-pooling \cite{Fernando2015} obtains 87.2\% on this dataset. This result is comparable with the results obtained with our method using end-to-end feature learning. Note, however, that the dimensionality of the feature vectors for the state-of-the-art method are extremely high (over 50,000 dimensional) compared to our 4,096 dimensional feature representation.

We now evaluate activity recognition performance on the Hollywood2 dataset. Results are reported in \tabref{tab.hollywood2} as average precision performance for each class and we take the mean average precision (mAP) to compare methods. As before, for this task, the best results are obtained by end-to-end training using rank-pooling for temporal encoding. The improvement over non-end-to-end rank pooling is {\bf 9.6} mAP. 
One may ask whether this performance could be achieved without end-to-end training but just fine-tuning the frame-level features. Simple frame-level fine-tuning obtains only 34.1 mAP (see Table~\ref{tab.hollywood2} with the column denoted by fn) while frame-level fine-tuning + rank-pooling obtains 36.3 mAP (see Table~\ref{tab.hollywood2} with the column denoted by fn+rankpool). Our end-to-end method obtains better results (40.6 mAP) compared to frame-level fine-tuning and fine-tuning with rank-pooling.

\subsection{Comparing to the state-of-the-art}
\label{sec:experiment:soa}
In this section we position our paper with respect to the current state-of-the-art performance in action recognition using standard datasets. 
We perform a series of experiments using hierarchical rank pooled deep cnn features for UCF101 and HMDB51 datasets.
We use two types of cnn features, one extracted from VGG-16-CNN architecture and the other extracted from ResNet architecture.
We also experimented with discriminative hierarchical rank pooling.
To further improve results, we use rank pooled \cite{Fernando2016} improved dense trajectory features (IDT) \cite{wang2013action} and optical-flow-based \cite{Brox2004} deep features for UCF101 and HMDB51 datasets.   
It should be emphasized, we choose parameters for hierarchical rank pooling based on the prior experimental results reported in Figures~\ref{fig:depth} and~\ref{fig:stridewindow} for each dataset, \ie, \emph{without} use of any grid search. 
As in \cite{Fernando2015,hoi2014} we use data augmentation only for Hollywood2 and HMDB51.
Results are reported in the following Table~\ref{tab:soa}.

When ResNet (RGB) features are combined with IDT, our hrp-based method obtains staggering 93.1\% on UCF101. 
Furthermore, if we add optical flow-based features  (similar to RGB-based hierarchical rank pooling), we obtain 93.6\% classification performance on UCF101 dataset. 
Only using ResNet-based RGB data and Optical Flow data, hierarchical rank pooling with default settings obtains 90.6\% on UCF101. 
Similarly, on HMDB51 dataset, hierarchical rank pooled ResNet (RGB + Optical-flow) features obtains 63.1\%. 
When we combine that with IDT features, for HMDB51 dataset we obtain 69.4 \% which is par with the state-of-the art for this dataset. 
On Hollywood2 dataset, hierarchical rank pooled VGG-16 features are combined with IDT to obtain state-of-the art 76.7 mAP. 
This is a significant improvement over rank pooling \cite{Fernando2015} method.

Because, different methods used different information such as optical flow features, different motion representations, different object models and trajectory-based features, it is difficult to compare methods in a purely fair manner using the published results alone. However, from these results obtained in Table~\ref{tab:soa}, we conclude that our sequence encoding method and end-to-end learning method are complimentary to existing techniques and video data and features.
\begin{table*}[t]
\centering
\small
\setlength{\tabcolsep}{3pt}
\begin{tabular}{l l c c c} \hline
Method & Feature & Holly.2 & HMDB51 &   UCF101 \\ \hline
\textit{hrp} & ResNet (RGB+Opt.Flow)  + \IDT		& --   & \textbf{69.4}   & \textbf{93.6}  \\ 
\textit{hrp} & ResNet (RGB) + \IDT			& --   & 68.9   & 93.1  \\ 
\textit{dhrp}& VGG-16 (RGB) + \IDT			& --   & 68.1 & 91.4  \\ 
\textit{hrp} & VGG-16 (RGB) + \IDT			& \textbf{76.7}   & 66.9   & 91.2  \\
\textit{hrp} & ResNet(RGB+Opt.Flow)	 		& --   & 63.1  & 90.6  \\  \hline
\textit{Zha~\etal \cite{Zha2015}} 	& VGG-19 (RGB)+\IDT		&    &    & 89.6  \\
\textit{Ng~\etal \cite{Yue-HeiNg2015}} & GoogLeNet (RGB + Opt.FLow)		&    &    & 88.6 \\
\textit{Simonyan~\etal \cite{Simonyan2014}} & CNN-M-2048  (RGB + Opt.FLow)		&    & 59.4 & 88.0 \\ 
\textit{Wang~\etal \cite{Wang2015}} & CNN-M-2048  (RGB + Opt.FLow) + \IDT			&      & 65.9 & 91.5 \\ 
\textit{Feichtenhofer}~\etal \cite{Feichtenhofer2016} & VGG-16 (RGB+Opt.Flow) + \IDT   &      & 69.2 & 93.5 \\	\hline
\multicolumn{4}{c}{Methods without CNN features} \\  \hline
\textit{Lan~\etal \cite{Lan2015a}} &	\IDT		& 68.0 & 65.4 & 89.1 \\
\textit{Fernando~\etal \cite{Fernando2015}} & \IDT		& \BL{73.7} & 63.7 &   \\
\textit{Hoai \etal \cite{hoi2014}} &	\IDT		& 73.6 & 60.8 &   \\
\textit{Peng \etal \cite{PengECCV2014} } & \IDT       	&    & \BL{66.8} &   \\
\textit{Wu \etal \cite{Wu_2014_CVPR} }   &  \IDT     	&    & 56.4 & 84.2 \\
\textit{Wang \etal \cite{wang2013action} } &   \IDT   	& 64.3 & 57.2 &   \\ \hline

\end{tabular}
\caption{Comparison with the state-of-the-art methods.} 
\label{tab:soa}
\end{table*}
\section{Conclusion}
\label{sec:conclusion}

In this paper we extend the rank pooling method in two ways. First, we introduce an effective, clean, and principled temporal encoding method based on the discriminative rank pooling framework which can be applied over vector sequences or convolutional neural network-based video sequences for action classification tasks. Our temporal pooling layer can sit above any CNN architecture and through a bilevel optimization formulation admits end-to-end learning of all model parameters. We demonstrated that this end-to-end learning significantly improves performance over a traditional rank-pooling approach by 21\% on the UCF-sports dataset and 9.6 mAP on the Hollywood2 dataset.

Secondly, we presented a novel temporal encoding method called \emph{hierarchical rank pooling} which consists of a network of non-linear operations and rank pooling layers. The obtained video representation has high capacity and capability of capturing informative dynamics of rich frame-based feature representations. We also presented a principled way to learn non-linear dynamics using a stack consisting of parametric non-linear activation layers, rank pooling layers, discriminative rank pooling layer and, a soft-max classifier which we coined \emph{discriminative hierarchical rank pooling}. We demonstrated substantial performance improvement over other temporal encoding and pooling methods such as max pooling, rank pooling, temporal pyramids, and LSTMs. Combining our method with features from the literature, we obtained good results on the Hollywood2, HMDB51 and UCF101 datasets.

One of the limitations of our rank pooling-based end-to-end learning is the computational complexity. Especially, the gradient computation of the rank-pooling operator is computationally expensive which limits applicability of end-to-end learning on very large datasets. One solution is to simplify the gradient computation or relax the constraints of the learning objective function as shown in prior work \cite{Bilen2016,bilen2016action}. If one wants to use discriminative rank pooling inside hierarchical rank pooling networks, then perhaps one can find a strategy to reuse the gradient computation of the neighbouring subsequences. These are possible solutions to make the back-propagation faster in our proposed framework.

We believe that the framework proposed in this paper will open the way for embedding other traditional optimization methods as subroutines inside CNN architectures. Our work also suggests a number of interesting future research directions. First, it would be interesting to explore more expressive variants of rank-pooling such as through kernalization. Second, our framework could be adapted to other sequence classification tasks (e.g., speech recognition) and we conjecture that as for video classification there may be accuracy gains for these other tasks too.
\begin{acknowledgements}
This research was supported by the Australian Research Council Centre of Excellence for Robotic Vision (project number CE140100016).
\end{acknowledgements}
%

\end{document}